\DeclarePairedDelimiter{\floor}{\lfloor}{\rfloor}
\DeclarePairedDelimiter{\ceil}{\lceil}{\rceil}
\DeclareMathOperator{\tr}{tr}
\newcolumntype{P}[1]{>{\centering\arraybackslash}p{#1}}
\newcommand{\bbf}{\bm{f}}
\newcommand{\bba}{\bm{a}}
\newcommand{\bbs}{\bm{s}}
\newcommand{\bbR}{\mathbb{R}}
\newcommand{\sign}{\mathop{\mathrm{sign}}}
\newtheorem{definition}{Definition}
\newtheorem{assumption}{Assumption}
\newtheorem{lemma}{Lemma}
\newtheorem{Example}{Example}
\newtheorem{theorem}{Theorem}
\newcommand{\arginf}{\mathop{\mathrm{arginf}}}
\newcommand{\argmin}{\mathop{\mathrm{argmin}}}
\newcommand\numberthis{\addtocounter{equation}{1}\tag{\theequation}}
\newcommand{\blind}{0}
\def\KL{\textcolor{red}}
\begin{document}

\def\spacingset#1{\renewcommand{\baselinestretch}%
{#1}\small\normalsize} \spacingset{1}


\if0\blind
{
  \title{\bf Semiparametric Regression for Spatial Data via Deep Learning}
\author{
	Kexuan Li \thanks{kexuan.li.77@gmail.com, Global Analytics and Data Sciences, Biogen, Cambridge, Massachusetts, US.},  	
	Jun Zhu \thanks{jzhu@stat.wisc.edu, Department of Statistics, University of Wisconsin-Madison. },
	Anthony R. Ives \thanks{arives@wisc.edu, Department of Integrative Biology, University of Wisconsin-Madison. }, \\
	Volker C.  Radeloff \thanks{radeloff@wisc.edu, Department of Forest and Wildlife Ecology, University of Wisconsin-Madison.}, and
	Fangfang Wang\thanks{fwang4@wpi.edu, Department of Mathematical Sciences, Worcester Polytechnic Institute. }
}

  \maketitle
} \fi

\if1\blind
{
  \bigskip
  \bigskip
  \bigskip
  \begin{center}
    {\LARGE\bf Title}
\end{center}
  \medskip
} \fi

\bigskip
\begin{abstract}
In this work, we propose a deep learning-based method to perform semiparametric regression analysis for spatially dependent data. To be specific, we use a sparsely connected deep neural network with rectified linear unit (ReLU) activation function to estimate the unknown regression function that describes the relationship between response and covariates in the presence of spatial dependence. Under some mild conditions, the estimator is proven to be consistent, and the rate of convergence is determined by three factors: (1) the architecture of neural network class, (2) the smoothness and (intrinsic) dimension of true mean function, and (3) the magnitude of spatial dependence. Our method can handle well large data set owing to the stochastic gradient descent optimization algorithm. Simulation studies on synthetic data are conducted to assess the finite sample performance, the results of which indicate that the proposed method is capable of picking up the intricate relationship between response and covariates. Finally, a real data analysis is provided to demonstrate the validity and effectiveness of the proposed method.
\end{abstract}

\noindent%
{\it Keywords:}  Semiparametric regression; Spatially dependent data; Deep Neural Networks; Stochastic gradient descent.

\spacingset{1.3}

\newpage
\section{Introduction}
With recent advances in remote sensing technology and geographical sciences, there has been a considerable interest in modeling spatially referenced data. The purpose of this paper is to develop new methodology that captures complex structures in such data via deep neural networks and Gaussian random fields. In addition, we provide a theoretical understanding of deep neural networks for spatially dependent data.

In recent years, deep neural network (DNN) has made a great breakthrough in many fields, such as computer vision \citep{Literature_Review_CV}, dynamics system \citep{ODE},  natural language processing \citep{Literature_Review_NLP}, drug discovery and toxicology \citep{Literature_Review_drug}, and variable selection \citep{DeepFS, DeepSurv}. Besides its successful applications, there has also been great progress on theoretical development of deep learning. \citet{DIVE} and \cite{schmidt-hieber} proved that the neural network estimator achieves the optimal (up to a logarithmic factor) minimax rate of convergence. \citet{liu2022optimal} further removed the logarithmic term and achieved the exact optimal nonparametric convergence rate. One of the appealing features of deep neural network is that it can circumvent the curse of dimensionality under some mild conditions.

Owing to the superior performance and theoretical guarantees of deep learning, applying deep learning to spatial data has also drawn much attention. For example, \citet{deep_spatial_2} fitted the integro-difference equation through convolutional neural networks and obtained probabilistic spatio-temporal forecasting. \citet{deep_spatial_4} constructed a deep probabilistic architecture to model nonstationary spatial processes using warping approach. \citet{deep_spatial_1} used variational autoencoder generative neural networks to analyze spatio-temporal point processes. \citet{deep_spatial_3} applied Bayesian deep neural network to spatial interpolation. However, there is a lack of theoretical understanding of the aforementioned work, which we will address in this paper.

In addition, we model spatial dependence by Gaussian random fields and develop model estimation with computational efficiency. Due to technological advances in data collecting process, the size of spatial datasets are massive and traditional statistical methods encounter two challenges. One challenge is the aggravated computational burden. To reduce computation cost, various methods have been developed, such as covariance tapering, fixed rank kriging, and Gaussian Markov random fields (see \citet{sun2012geostatistics} for a comprehensive review). The other challenge is data storage and data collection. Many spatial datasets are not only big, but are also generated by different sources or in an online fashion such that the observations are generated one-by-one. In both cases, we cannot process entire datasets at once.  To overcome these two challenges, \citet{SGD1951} proposed a computationally scalable algorithm called stochastic gradient descent (SGD) and achieved great success in many areas. Instead of evaluating the actual gradient based on an entire dataset, SGD estimates the gradient using only one observation which makes it computationally feasible with large scale data and streaming data. Its statistical inferential properties have also been studied by many researchers \citep{SGDsu2018, SGDliu2021}.

Before proceeding, it is essential to provide a brief overview of the classical approaches for formulating spatial dependence in a spatial process. For a spatial process, various approaches have been proposed to formulate the spatial dependence. In spatially varying coefcient (SVC) models, covariates may have different effect along with locations, which are referred to as ``spatial non-stationarity''. Thus, SVC models allows the coefficients of the covariates to change with the locations. In other words, the spatial dependence is entirely explained by the regressors, while the disturbances are independent (see e.g. \citet{SVCM_1, SVCM_2, SVCM_3, SVCM_4}). Another class of models are based on the spatial autoregressive (SAR) models of \citet{SAR_cliff}, where the information about spatial dependence is contained in the spatial weight matrix, and the response variable at each location is assumed to be affected by a weighted average of the dependent variables from other sampled units (see \citet{SAR_lee_2002, SAR_lee_2004}. In SAR models, the spatial weight matrix is assumed to be known, which is infeasible in practice, especially in large dataset. Obviously, both SVC and SAR models have some limitations. First, they assume the relationship between the response and the independent variable is linear and ignore complex interaction and nonlinear structure. Second, they both involve computing the inverse of an $n \times n$ matrix, where $n$ is the number of observations, which requires $O(n^3)$ time complexity and $O(n^2)$ memory complexity. Thus, the computational burden for both SVC and SAR model is substantial.

To meet these challenges, here we develop deep learning-based semiparametric regression for spatial data. Specifically, we use a sparsely connected feedforward neural network to fit the regression model, where the spatial dependence is captured by Gaussian random fields. By assuming a compositional structure on the regression function, the consistency of the neural network estimator is guaranteed. The advantages of the proposed method are fourfold. First, we do not assume any parametric functional form for the regression function, allowing the true mean function to be nonlinear or with complex interactions. This is an improvement over many of the existing parametric, semiparametric, or nonparametric approaches  {\citep{SVCM_1, SVCM_2, SVCM_3, SVCM_4, SAR_lee_2002, nonparametric_robinson2011, nonparametric_jenish2012, nonparametric_wavelet, nonparametric_lu2014, nonparametric_kurisu2019, nonparametric_kurisu2022}. Second, under some mild technical conditions, we show that the estimator is consistent. To the best of our knowledge, this is the first theoretical result in deep neural network for spatially dependent data. Third, the convergence rate is free of the input dimension, which means our estimator does not suffer from the curse of dimensionality. Finally, owing to the appealing properties of SGD, our method is feasible for large scale dataset and streaming data.

The remainder of the paper is organized as follows. Section \ref{model_estimator} formulates the statistical problem and presents the deep neural network estimator. The computational aspects and theoretical properties of the estimator are given in Section \ref{implementation_theory}. Section \ref{simulation} evaluates the finite-sample performance of the proposed estimator by a simulation study. We apply our method to a real-world dataset in Section \ref{real_data} and some concluding remarks are provided in Section \ref{conclusion}. Technical details are provided in Appendix.

\section{Model and Estimator} \label{model_estimator}

In this section, we first formulate the problem and then present the proposed estimator under a deep learning framework.

\subsection{Model Setup}
For a spatial domain of interest $\mathcal{S}$, we consider the following semiparametric spatial regression model:
\begin{equation} \label{eq_spatial_regression}
y(\bm{s})= f_0(\bm{x}(\bm{s})) + e_1(\bm{s}) + e_2(\bm{s}), \,\,\, \bm{s} \in \mathcal{S}
\end{equation}
where $f_0:[0, 1]^d\rightarrow\bbR$ is an unknown mean function of interest, $\bm{x}(\bm{s}) = (x_1(\bm{s}), \ldots, x_d(\bm{s}))^\top$ represents a $d$-dimensional vector of covariates at location $\bm{s}$ with $x_i(\bm{s}) \in [0,1]$,  $e_1(\bm{s})$ is a  mean zero Gaussian random field with covariance function $\gamma({\bm{s}, \bm{s}'})$, $\bbs, \bbs'\in \mathcal{S}$, 
and $e_2(\bm{s})$ is a spatial Gaussian white noise process with mean 0 and variance  $\sigma^2$.  
Furthermore, we assume that $e_1(\bm{s})$, $e_2(\bm{s})$, and $\bm{x}(\bm{s})$ are independent of each other. 
Thus the observation $y(\bbs)$ comprises three components: large-scale trend $f_0(\bm{x}(\bm{s}))$, small-scale spatial variation $e_1(\bbs)$, and measurement error $e_2(\bbs)$; see, for instance, \cite{cressie2008fixed}.

In the spatial statistics literature, it is popular to focus on predicting the hidden spatial process $y(\bm{s})^* = f_0(\bm{x}(\bm{s})) + e_1(\bm{s})$ using the observed information  \citep{cressie2008fixed}.
However, the primary interest of this paper is to estimate the large-scale trend $f_0(\bm{x}(\bm{s}))$, where the relationship between the hidden spatial process and the covariates could be complex in nature.  To capture such a complex relationship, we assume that $f_0$ is a composition of several functions inspired by neural networks characteristics \citep{schmidt-hieber}.  H\" older smoothness (see Definition \ref{holder.smoothness} in Appendix) is a commonly used smoothness assumption for regression function in nonparametric and semiparametric literature. Thus it is natural to assume the true mean function $f_0$ is a composition of H\" older smooth functions, which is formally stated in the following assumption.

\begin{assumption}\label{CompositionalStructure}
The function $f_0:\bbR^d\rightarrow\bbR$ has a compositional structure with parameters $(L_*, \bm{r}, \bm{\tilde{r}}, \bm{\beta}, \bm{a}, \bm{b}, \bm{C})$ where $L_*\in \mathbb{Z}_+$, $\bm{r}=(r_0, \ldots, r_{L_*+1})^\top$ $\in \mathbb{Z}_+^{L_*+2}$ with $r_0=d$ and $r_{L_*+1}=1$, $\bm{\tilde{r}}=(\tilde{r}_0,\ldots, \tilde{r}_{L_*})^\top \in \mathbb{Z}_+^{L_*+1}$, $\bm{\beta}=(\beta_0,\ldots, \beta_{L_*})^\top \in \mathbb{R}_+^{L_*+1}$, $\bm{a}=(a_0,\ldots, a_{L_*+1})^\top$, $\bm{b}=(b_0,\ldots, b_{L_*+1})^\top \in \mathbb{R}^{L_*+2}$, and $\bm{C}=(C_0,\ldots, C_{L_*})^\top \in \mathbb{R}_+^{L_*+1}$; that is,
\begin{equation*}
     f_0(\bm{z})=\bm{g}_{L_*}\circ\ldots\circ \bm{g}_1 \circ \bm{g}_0(\bm{z}),\quad\quad \textrm{ for } \bm{z} \in [a_0, b_0]^{r_0}
\end{equation*}
where $\mathbb{Z}_+, \mathbb{R}_+$ denote the sets of positive integers and positive real numbers, respectively, $\bm{g}_i=(g_{i,1},\ldots, g_{i,r_{i+1}})^\top: [a_i, b_i]^{r_i}\to [a_{i+1}, b_{i+1}]^{r_{i+1}}$ for some $|a_i|, |b_i|\leq C_i$ and the functions $g_{i,j}: [a_i, b_i]^{\tilde{r}_i} \to [a_{i+1}, b_{i+1}]$ are $(\beta_i, C_i)$-H\" older smooth only relying on $\tilde{r}_i$ variables and $\tilde{r}_i\le r_i$.
\end{assumption}

Without loss of generality, we assume $C_i>1$ in Assumption \ref{CompositionalStructure}.  The parameter $L_*$ refers to the total number of layers, i.e., the number of composite functions, $\bm{r}$ is the whole number of variables in each layer, whereas $\bm{\tilde{r}}$ is the number of  ``active" variables in each layer. The two parameter vectors $\bm{\beta}$ and $\bm{C}$ pertain to the H\" older smoothness in each layer, while $\bm{a}$ and $\bm{b}$ define the domain of $\bm{g}_i$ in the $i$th layer.  For the rest of the paper, we will use $\mathcal{CS}(L_*, \bm{r}, \bm{\tilde{r}}, \bm{\beta},\bm{a}, \bm{b}, \bm{C})$ to denote the class of functions that  have a compositional structure as specified in Assumption \ref{CompositionalStructure} with parameters $(L_*, \bm{r}, \bm{\tilde{r}}, \bm{\beta}, \bm{a}, \bm{b}, \bm{C})$.

It is worth mentioning that Assumption \ref{CompositionalStructure} is commonly adopted in deep learning literature; see \citet{IntroBauer}, \citet{schmidt-hieber}, \citet{Kohle2019}, \citet{DIVE}, \citet{ODE}, among others. This compositional structure covers a wide range of function classes including the generalized additive model. In Example 1 and Figure \ref{figure:example1}, we present an illustrative example to help readers understand the concept of compositional structure within the context of the generalized additive model.

\begin{Example}\label{example}
The generalized additive model is a generalized linear model with a linear predictor involving a sum of smooth functions of covariates \citep{GAM}. Suppose $f(\bm{z}) = \varphi(\sum_{i=1}^{d}h_i(z_i))$, where $\varphi(\cdot)$ is $(\beta_\varphi, C_\varphi)$-H\" older smooth and $h_{i}(\cdot)$ are $(\beta_h, C_h)$-H\" older smooth, for some $(\beta_\varphi, C_\varphi)$ and $(\beta_h, C_h)$. Clearly, $f(\bm{z})$ can be written as a composition of three functions $f(\bm{z})=\bm{g}_2\circ \bm{g}_1\circ \bm{g}_0(\bm{z})$ with $\bm{g}_0(z_1, \ldots, z_{d}) = (h_1(z_1),\ldots,h_d(z_{d}))^{\top}$, $\bm{g}_1(z_1, \ldots, z_{d}) =\sum_{i=1}^{d}z_i$, and $\bm{g}_2(z) = \varphi(z)$. Here, $L_*=2$, $\bm{r}=(d, d, 1, 1)^{\top}$, $\bm{\tilde{r}} = (d, d, 1)^{\top}$, and $\bm{\beta} = (\beta_h, \infty, \beta_\varphi)^{\top}$. 
\end{Example}

\begin{figure}[ht!]
\centering
\includegraphics[width=6in]{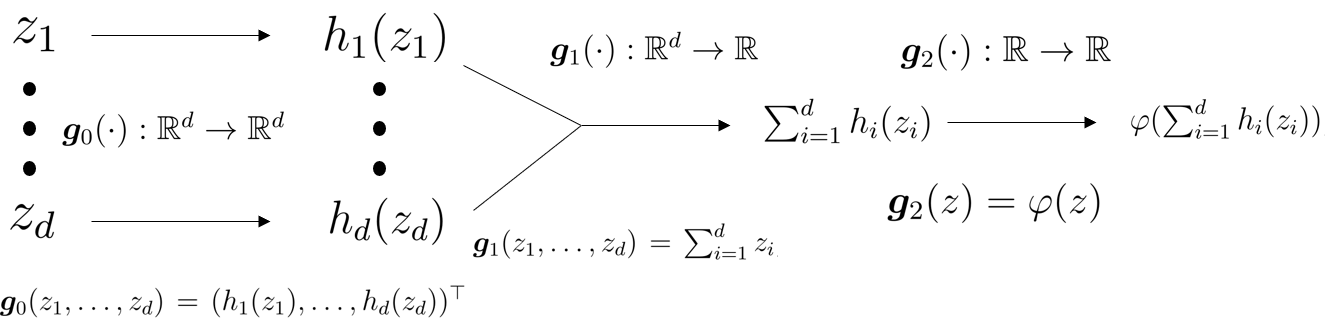}
\caption{Illustration of the compositional structure for the generalized additive model in Example 1.} \label{figure:example1}
\end{figure}

\subsection{Deep Neural Network (DNN) Estimator}

In this paper, we consider estimating the unknown function $f_0$ via a deep neural network owing to the complexity of $f_0$ and the flexibility of neural networks.  So before presenting our main results, we first briefly review the neural network terminologies pertaining to this work.

An activation function is a nonlinear function used to learn the complex pattern from  data. In this paper, we focus on the Rectified Linear Unit (ReLU) shifted activation function which is defined as $\sigma_{\bm{v}}(\bm{z})= (\sigma(z_1-v_1),\ldots,\sigma(z_d-v_d))^{\top}$,  where $\sigma(s) = \max\{0,s\}$ and $\bm{z} = (z_1,\ldots,z_d)^{\top} \in \bbR^d$.  ReLU activation function enjoys both theoretical and computational advantages.  The projection property $\sigma \circ \sigma = \sigma$ can facilitate the proof of consistency, while ReLU activation function can help avoid vanishing gradient problem. The ReLU feedforward neural network $\bbf(\bm{z}, W, v)$ is given by
\begin{equation}\label{fnn}
\bbf(\bm{z}, W, v) =W_{L}\sigma_{\bm{v}_{L}} \ldots W_1 \sigma_{\bm{v}_1} W_0 \bm{z}, \quad \bm{z}\in\bbR^{p_0},
\end{equation}
where $\{(W_0,\ldots,W_{L}): W_{l}\in \mathbb{R}^{p_{l+1}\times p_{l}}, 0\le l\le L\}$ is the collection of weight matrices, $\{(\bm{v}_1,\ldots,\bm{v}_{L}): \bm{v}_{l}\in \mathbb{R}^{p_l}, 1\le l\le L\}$ is the collection of so-called biases (in the neural network literature), and $\sigma_{\bm{v}_l}(\cdot)$, $1\le l\le L$, are the ReLU shifted activation functions. Here, $L$ measures the number of hidden layers, i.e., the length of the network, while $p_j$ is the number of units in each layer, i.e., the depth of the network. When using a ReLU feedforward neural network to estimate the regression problem \eqref{eq_spatial_regression}, we need to have $p_0 = d$ and $p_{L+1} = 1$; and the parameters that need to be estimated are the weight matrices $(W_j)_{j=0,\ldots,L}$ and the biases $(\bm{v}_j)_{j=1,\ldots,L}$.   

By definition,  a ReLU feedforward neural network can be written as a composition of simple nonlinear functions; that is,
\[
\bbf(\bm{z}, W, v) = \bm{g}_{L}\circ\ldots\circ \bm{g}_1 \circ \bm{g}_0(\bm{z}), \quad \bm{z}\in\bbR^{p_0},
\]
where $\bm{g}_i(\bm{z}) = W_{i}\sigma_{\bm{v}_{i}}(\bm{z})$, $\bm{z}\in\bbR^{p_i}$, $i=1, \ldots, L$, and  $\bm{g}_0(\bm{z})=W_0 \bm{z}, \bm{z}\in\bbR^{p_0}$. Unlike traditional function approximation theory where a complex function is considered as an infinite sum of simpler functions (such as Tayler series, Fourier Series, Chebyshev approximation, etc.), deep neural networks approximate a complex function via compositions, i.e., approximating the function by compositing simpler functions  \citep{IntroApproximation2, farrell2021deep, YAROTSKY2017103}.  Thus, a composite function can be well approximated by a feedforward neural network.  That is why we assume the true mean function $f_0$ has a compositional structure.

In practice, the length and depth of the networks can be extremely large, thereby easily causing overfitting. To overcome this problem, a common practice in deep learning is to randomly set some neurons to zero, which is called dropout. Therefore, it is natural to assume the network space is sparse and all the parameters are bounded by one, where the latter can be achieved by dividing all the weights by the maximum weight \citep{IntroBauer, schmidt-hieber, Kohle2019}.  As such, we consider the following sparse neural network class with bounded weights
\begin{equation}\label{class2}
\mathcal{F}(L, \bm{p}, \tau, F) = \left\{f \textrm{ is of form } \eqref{fnn}: \max_{j=0,\ldots,L} \|W_j\|_\infty + |\bm{v}_j|_\infty \leq 1, \sum_{j=0}^{L}(\|W_j\|_0 + |\bm{v}_j|_0) \leq \tau, \|f\|_\infty \leq F\right\},
\end{equation}
where $\bm{p}=(p_0, \ldots, p_{L+1})$ with $p_0 = d$ and $p_{L+1}=1$, and $\bm{v}_0$ is a vector of zeros.   This class of neural networks is also adopted in \citet{schmidt-hieber}, \citet{DIVE}, and \citet{ODE}.

Suppose that the process $y(\cdot)$ is observed at a finite number of spatial locations $\{\bm{s}_1, \ldots, \bm{s}_n\}$ in $\mathcal{S}$.
The desired DNN estimator of $f_0$ in Model (\ref{eq_spatial_regression}) is a sparse neural network in $\mathcal{F}(L, \bm{p}, \tau, F)$ with the smallest empirical risk;  that is,
\begin{equation} \label{FNN Estimator}
\widehat{f}_{\textrm{global}}(\widehat{W}, \widehat{v}) = \argmin_{f\in\mathcal{F}(L, \bm{p}, \tau, F)}n^{-1}\sum_{i=1}^{n}(y(\bm{s}_i) - f(\bm{x}(\bm{s}_i))^2.
\end{equation}
For simplicity,  we sometimes write $\widehat{f}_{\textrm{global}}$ for  $\widehat{f}_{\textrm{global}}(\widehat{W}, \widehat{v})$ if no confusion arises.

\section{Computation and Theoretical Results} \label{implementation_theory}
In this section, we describe the computational procedure used to optimize the objective function (\ref{FNN Estimator}) and present the main theoretical results.

\subsection{Computational Aspects} \label{Computational_Aspects}

Because (\ref{FNN Estimator}) does not have an exact solution,  we use a stochastic gradient descent (SGD)-based algorithm to optimize (\ref{FNN Estimator}). In contrast to a gradient descent algorithm which requires a full dataset to estimate gradients in each iteration, SGD or mini-batch gradient descent only needs an access to  a subset of observations during each update, which is capable of training relatively complex models for large datasets and computationally feasible with streaming data.  Albeit successful applications in machine learning and deep learning,  SGD still suffers from some potential problems. For example, the rate of convergence to the minima is slow; the performance is very sensitive to  tuning parameters. To circumvent these problems, various methods have been proposed, such as RMSprop and Adam \citep{Adam}. In this paper, we use Adam optimizer to solve (\ref{FNN Estimator}).

During the training process, there are many hyper-parameters to tune in our approach: the number of layers $L$, the number of neurons in each layer $\bm{p}$, the sparse parameter $\tau$, and the learning rate. These hyper-parameters play an important role in the learning process. However, it is challenging to determine the values of hyper-parameters without domain knowledge. In particular, it is challenging to control the sparse parameter $\tau$ directly in the training process. Thus, we add an $\ell_1$-regularization penalty to control the number of inactive neurons in the network. The idea of adding a sparse regularization to hidden layers in deep learning  is very common; see, for instance, \cite{scardapane2017group} and \cite{Lassonet}. In this paper, we use a $5$-fold  cross-validation to select tuning parameters.

\subsection{Theoretical Results}

Recall that the minimizer of (\ref{FNN Estimator}), $\widehat{f}_{\textrm{global}}$, is practically unattainable and we use an SGD-based algorithm to minimize the objective function (\ref{FNN Estimator}), which may converge to a local minimum.  The actual estimator obtained by minimizing (\ref{FNN Estimator}) is denoted by  $\widehat{f}_{\textrm{local}} \in \mathcal{F}(L, \bm{p}, \tau, F)$. We define the  difference between the expected empirical risks of $\widehat{f}_{\textrm{global}}$ and $\widehat{f}_{\textrm{local}}$ as
\begin{align} \label{Delta_n}
\Delta_n(\widehat{f}_{\textrm{local}}) &\doteq \bm{E}_{f_0} \left[ \frac{1}{n}\sum_{i=1}^{n}(y(\bm{s}_i) - \widehat{f}_{\textrm{local}}(\bm{x}(\bm{s}_i))^2 - \inf_{\tilde{f} \in \mathcal{F}(L, \bm{p}, \tau, F)}\frac{1}{n}\sum_{i=1}^{n}(y(\bm{s}_i) - \tilde{f}(\bm{x}(\bm{s}_i))^2 \right] \nonumber\\
&= \bm{E}_{f_0}\left[ \frac{1}{n}\sum_{i=1}^{n}(y(\bm{s}_i) - \widehat{f}_{\textrm{local}}(\bm{x}(\bm{s}_i))^2 - \frac{1}{n}\sum_{i=1}^{n}(y(\bm{s}_i) - \widehat{f}_{\textrm{global}}(\bm{x}(\bm{s}_i))^2\right],
\end{align}
where $\bm{E}_{f_0}$ stands for the expectation with respect to the true regression function $f_0$.
For any $\widehat{f}\in \mathcal{F}(L, \bm{p}, \tau, F)$, we consider the following estimation error: 
\begin{equation}\label{eqn.risk}
R_n(\widehat{f}, f_0) \doteq \bm{E}_{f_0}\left[ \frac{1}{n}\sum_{i=1}^{n} \big(\widehat{f}(\bm{x}(\bm{s}_i)) - f_0(\bm{x}(\bm{s}_i))\big)^2 \right].
\end{equation}

The oracle-type theorem below gives an upper bound for the estimation error.
\begin{theorem}\label{thm: oracle}
Suppose that the unknown true mean function $f_0$ in (\ref{eq_spatial_regression}) satisfies $\|f_0\|_\infty \le F$ for some $F \geq 1$. For any $\delta, \epsilon \in (0, 1]$ and $\widehat{f} \in \mathcal{F}(L, \bm{p}, \tau, F)$,   the following oracle inequality holds:
\begin{align*}
	R_n(\widehat{f}, f_0)  \lesssim&  (1+\varepsilon)\left( \inf_{\tilde{f} \in \mathcal{F}(L, \bm{p}, \tau, F)} \|\tilde{f}-f_0\|^2_\infty   + \zeta_{n, \varepsilon, \delta} + \Delta_n(\widehat{f}) \right) ,
\end{align*}
where
\begin{align*}
	\zeta_{n, \varepsilon, \delta} \asymp& \frac{1}{\varepsilon} \left[ \delta  \Big(n^{-1}\tr(\Gamma_n)  +  2\sqrt{n^{-1}\tr(\Gamma_n^2)} + 3\sigma \Big) + \frac{\tau}{n} \left(\log(L/\delta) + L\log\tau \right) (n^{-1}\tr(\Gamma_n^2) + \sigma^2+1)      \right],
\end{align*}
and $\Gamma_n = [\gamma(\bm{s}_i, \bm{s}_{i'})]_{1\leq i, i' \leq n}$.
\end{theorem}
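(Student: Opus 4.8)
The plan is to establish the oracle inequality through the standard two-step strategy for constrained empirical risk minimization, adapted to accommodate the Gaussian-random-field dependence. Write $\xi_i = e_1(\bm{s}_i)+e_2(\bm{s}_i)$ for the combined noise, so that $\bm{\xi}=(\xi_1,\dots,\xi_n)^\top$ is a centered Gaussian vector with covariance $\Sigma_n = \Gamma_n + \sigma^2 I_n$, independent of $\{\bm{x}(\bm{s}_i)\}$. First I would expand the empirical squared loss $n^{-1}\sum_i(y(\bm{s}_i)-\widehat f(\bm{x}(\bm{s}_i)))^2$ and the analogous loss for an arbitrary fixed $\tilde f \in \mathcal{F}(L,\bm{p},\tau,F)$, subtract them, and invoke two facts: that $\widehat{f}_{\textrm{global}}$ minimizes the empirical risk over the class, and that $\Delta_n(\widehat f)$ measures exactly the optimization gap between $\widehat f$ and $\widehat{f}_{\textrm{global}}$. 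Because $\tilde f$ and $f_0$ are deterministic while the noise is mean-zero and independent of the covariates, every cross term of the form $\bm{E}_{f_0}[\xi_i(\tilde f - f_0)]$ vanishes and the $\xi_i^2$ contributions cancel. This yields the basic inequality
\begin{equation*}
R_n(\widehat f, f_0) \le \inf_{\tilde f\in\mathcal{F}}\|\tilde f - f_0\|_\infty^2 + \Delta_n(\widehat f) + \frac{2}{n}\,\bm{E}_{f_0}\Big[\sum_{i=1}^n \xi_i\big(\widehat f(\bm{x}(\bm{s}_i)) - f_0(\bm{x}(\bm{s}_i))\big)\Big],
\end{equation*}
so that everything reduces to controlling the last empirical-process term, whose difficulty is that $\widehat f$ is itself a function of $\bm{\xi}$.

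The second step is to bound this cross term uniformly over the network class. I would cover $\mathcal{F}(L,\bm{p},\tau,F)$ by a $\delta$-net $\{f_1,\dots,f_N\}$ in $\|\cdot\|_\infty$ and split $\widehat f - f_0 = (f_{j^*} - f_0) + (\widehat f - f_{j^*})$, where $f_{j^*}$ is the closest net point to $\widehat f$. The combinatorial cost of the net is supplied by the covering-number estimate for sparse ReLU networks (as in Schmidt-Hieber), giving $\log N \lesssim \tau\big(\log(L/\delta) + L\log\tau\big)$, which is precisely the factor appearing in $\zeta_{n,\varepsilon,\delta}$. For the discretization residual I would use $\|\widehat f - f_{j^*}\|_\infty \le \delta$ together with the expected magnitude of the Gaussian noise; bounding $n^{-1}\sum_i|\xi_i|$ and the associated quadratic forms in terms of $n^{-1}\tr(\Gamma_n)$, $\sqrt{n^{-1}\tr(\Gamma_n^2)}$ and $\sigma$ (separating the correlated part $e_1$ from the white-noise part $e_2$) produces the $\delta\big(n^{-1}\tr(\Gamma_n)+2\sqrt{n^{-1}\tr(\Gamma_n^2)}+3\sigma\big)$ contribution.

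For the net-center term, for each fixed $f_j$ the variable $\sum_i \xi_i(f_j(\bm{x}(\bm{s}_i)) - f_0(\bm{x}(\bm{s}_i)))$ is, conditionally on the covariates, centered Gaussian with variance $\bm{u}_j^\top \Sigma_n \bm{u}_j$, where $\bm{u}_j$ is the vector of evaluated differences. I would apply the elementary inequality $2ab \le \varepsilon a^2 + \varepsilon^{-1} b^2$ with $a \propto n^{-1/2}|\bm{u}_j|_2$ to peel off a piece proportional to $n^{-1}|\bm{u}_j|_2^2$; this is absorbed into $R_n$ and is the origin of the $(1+\varepsilon)$ prefactor. The leftover is a maximum of $N$ squared Gaussians, whose expectation is controlled by a sub-Gaussian/chi-square maximal inequality, contributing $\varepsilon^{-1} n^{-1}(\log N)\,(n^{-1}\tr(\Gamma_n^2)+\sigma^2+1)$, where the variance proxy $n^{-1}\tr(\Gamma_n^2)+\sigma^2+1$ comes from controlling $\bm{u}_j^\top\Sigma_n\bm{u}_j/|\bm{u}_j|_2^2$ through the spectral/Frobenius size of $\Gamma_n$ together with the boundedness $\|f\|_\infty \le F$. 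Collecting the net, discretization, and weighting terms and taking the infimum over $\tilde f$ gives the claimed bound with $\zeta_{n,\varepsilon,\delta}$.

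The main obstacle is the spatial dependence: because $\{\xi_i\}$ are correlated through $\Gamma_n$, the usual i.i.d. empirical-process machinery (Bernstein's inequality, independent-sum symmetrization) does not apply directly, and the fluctuations of the noise must instead be quantified through Gaussian concentration for correlated vectors, specifically through quadratic forms $\bm{u}^\top \Gamma_n \bm{u}$ and the norms $\tr(\Gamma_n)$, $\tr(\Gamma_n^2)$ that summarize the covariance. Getting these trace quantities to enter with the right powers, namely the linear $\tr(\Gamma_n)$ in the discretization term versus $\tr(\Gamma_n^2)$ in the complexity term, while simultaneously carrying the single net radius $\delta$ consistently through both the covering number and the discretization error, is the delicate bookkeeping at the heart of the argument.
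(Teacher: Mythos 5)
Your proposal follows essentially the same route as the paper's proof: the basic inequality via the definition of $\Delta_n$ and cancellation of the noise cross terms, a $\delta$-net over $\mathcal{F}(L,\bm{p},\tau,F)$ with the Schmidt-Hieber covering-number bound, a $\tr(\Gamma_n)$-type bound for the discretization residual, a conditional-Gaussian maximal inequality for the net-center term with variance controlled through $\bm{u}_j^\top(\Gamma_n+\sigma^2 I)\bm{u}_j$, and the elementary weighted inequality to produce the $(1+\varepsilon)$ prefactor. The only detail you omit is the trivial side case $\log\mathcal{N}>n$, which the paper disposes of using $R_n(\widehat f,f_0)\le 4F^2$; otherwise the argument is the same.
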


The convergence rate in Theorem \ref{thm: oracle} is determined by three components. The first component $\inf_{\tilde{f} \in \mathcal{F}(L, \bm{p}, \tau, F)} \|\tilde{f}-f_0\|^2_\infty$ measures the distance between the neural network class $\mathcal{F}(L, \bm{p}, \tau, F)$ and $f_0$, i.e., the approximation error. The second term $\zeta_{n, \varepsilon, \delta}$ pertains to the estimation error, and $\Delta_n(\widehat{f})$ is owing to the difference between $\widehat{f}$ and the oracle neural network estimator $\widehat{f}_{\textrm{global}}$. It is worth noting that the upper bound in Theorem \ref{thm: oracle} does not depend on the network architecture parameter $\bm{p}$, i.e., the width of the network,  in that the network is sparse and its ``actual'' width is controlled by the sparsity parameter $\tau$. To see this, after removing all the inactive neurons, it is straightforward to show that
$\mathcal{F}(L, \bm{p}, \tau, F) = \mathcal{F}(L, (p_0, p_1\wedge\tau, \ldots, p_L\wedge\tau, p_{L+1}), \tau, F)$ \citep{schmidt-hieber}.

Next, we turn to the consistency of the DNN estimator $\widehat{f}_{\textrm{local}}$ for $f_0 \in \mathcal{CS}(L_*, \bm{r}, \bm{\tilde{r}}, \bm{\beta},\bm{a}, \bm{b}, \bm{C})$.  In nonparametric regression, the estimation convergence rate is heavily affected by the smoothness of the function.  Consider the class of  composite functions  $\mathcal{CS}(L_*, \bm{r}, \bm{\tilde{r}}, \bm{\beta},\bm{a}, \bm{b}, \bm{C})$.  Let $\beta_i^*=\beta_i\prod_{s=i+1}^{L_*}(\beta_s\wedge 1)$ for $i=0,\ldots, L_*$ and $i^*=\argmin_{0\leq i \leq L_*}\beta_i^*/\tilde{r}_i$, with the convention $\prod_{s=L_*+1}^{L_*}(\beta_s\wedge 1)=1$. Then $\beta^*=\beta_{i^*}^*$ and $r^*=\tilde{r}_{i^*}$ are known as the intrinsic smoothness and intrinsic dimension of $f \in \mathcal{CS}(L_*, \bm{r}, \bm{\tilde{r}}, \bm{\beta},\bm{a}, \bm{b}, \bm{C})$. Similar definitions could be found in literature \cite{nonparametric_kurisu2019, nakada2020adaptive, cloninger2021deep, kohler2022estimation, wang2023deep}. These quantities play an important role in controlling the convergence rate of the estimator.
To better understand $\beta_i^*$ and $i^*$, think about  the composite function from the $i$th to the last layer, i.e.,  $\bm{h}_i(\bm{z})=\bm{g}_{L_*}\circ\ldots\circ \bm{g}_{i+1} \circ \bm{g}_i(\bm{z}): [a_i, b_i]^{r_i}\to\mathbb{R}$; then $\beta_i^*$ can be viewed as the smoothness of $\bm{h}_i$ and $i^*$ is the layer of the least smoothness after rescaled by the respective number of ``active'' variables $\tilde{r}_i$, $i=0,\ldots, L_*$.
The following theorem establishes the consistency of $\widehat{f}_{\textrm{local}}$ as an estimator of $f_0$ and its convergence rate in the presence of spatial dependence.

\begin{theorem}\label{thm.main}
Suppose Assumption \ref{CompositionalStructure} is satisfied, i.e., $f_{0} \in \mathcal{CS}(L_*, \bm{r}, \bm{\tilde{r}}, \bm{\beta},\bm{a}, \bm{b}, \bm{C})$. Let $\widehat{f}_{\textrm{local}} \in \mathcal{F}(L, \bm{p}, \tau, F)$ be an estimator of $f_0$. Further assume that $F \geq \max_{i=0,\ldots,L^*}(C_i, 1)$, $N\doteq\min_{i=1,\ldots,L} p_i \geq 6\eta\max_{i=0,\ldots,L_*}(\beta_i+1)^{\tilde{r}_i} \vee (\tilde{C}_i+1)e^{\tilde{r}_i}$ where $\eta = \max_{i=0,\ldots, L_*}(r_{i+1}(\tilde{r}_i + \ceil{\beta_i}))$, and  $\tau\lesssim LN$.  Then we have
\[
R_n(\widehat{f}_{\textrm{local}}, f_0) \lesssim \varsigma_n,
\]
where
\[
 \varsigma _{n} \asymp (N2^{-L})^{2\prod_{l=1}^{L_*}\beta_l\wedge1} + N^{-\frac{2\beta^*}{r^*}} + \frac{(\tr(\Gamma_n^2) + n)(LN\log(Ln^2) +  L^2N \log(LN))}{n^2} + \Delta_n(\widehat{f}_{\textrm{local}}),
\]
and $\tilde{C}_i$ are constants only depending on $\bm{C}, \bm{a}, \bm{b}$, and $\Gamma_n = [\gamma(\bm{s}_i, \bm{s}_{i'})]_{1\leq i, i' \leq n}$.
\end{theorem}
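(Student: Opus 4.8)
The plan is to obtain Theorem \ref{thm.main} by feeding a compositional-function approximation bound into the oracle inequality of Theorem \ref{thm: oracle} and then tuning the free parameters $\varepsilon$ and $\delta$. Since $\widehat{f}_{\textrm{local}} \in \mathcal{F}(L,\bm{p},\tau,F)$, Theorem \ref{thm: oracle} applies verbatim and gives, for every $\varepsilon,\delta\in(0,1]$,
\[
R_n(\widehat{f}_{\textrm{local}},f_0) \lesssim (1+\varepsilon)\Big(\inf_{\tilde{f}\in\mathcal{F}(L,\bm{p},\tau,F)}\|\tilde{f}-f_0\|_\infty^2 + \zeta_{n,\varepsilon,\delta} + \Delta_n(\widehat{f}_{\textrm{local}})\Big).
\]
It therefore suffices to (i) bound the approximation error $\inf_{\tilde{f}}\|\tilde{f}-f_0\|_\infty^2$ by the first two summands of $\varsigma_n$, and (ii) choose $\varepsilon,\delta$ so that $\zeta_{n,\varepsilon,\delta}$ reduces to the third summand, after which $\Delta_n(\widehat{f}_{\textrm{local}})$ carries over unchanged.

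First I would prove the approximation bound, which is where the hypotheses on the architecture enter. The requirements $F\ge\max_i(C_i,1)$, $N\ge 6\eta\max_i(\beta_i+1)^{\tilde{r}_i}\vee(\tilde{C}_i+1)e^{\tilde{r}_i}$, and $\tau\lesssim LN$ are exactly what is needed to embed the compositional structure of $f_0\in\mathcal{CS}(L_*,\bm{r},\bm{\tilde{r}},\bm{\beta},\bm{a},\bm{b},\bm{C})$ into a sparse ReLU network of the class $\mathcal{F}(L,\bm{p},\tau,F)$. Following the standard deep-approximation construction, each H\"older-$(\beta_i,C_i)$ block $g_{i,j}$, which depends on only $\tilde{r}_i$ active variables, is approximated by a subnetwork assembled from localized Taylor polynomials whose monomials are realized by the $L$-layer ReLU approximation of the square/product map; the lower bound on $N$ guarantees that every layer has enough neurons to host these subnetworks, while $\tau\lesssim LN$ controls the total number of active parameters. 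Propagating the per-block errors through the $L_*$ compositions — where an outer H\"older-$\beta_l$ block contracts an incoming error by the exponent $\beta_l\wedge1$ — yields
\[
\inf_{\tilde{f}\in\mathcal{F}(L,\bm{p},\tau,F)}\|\tilde{f}-f_0\|_\infty^2 \lesssim (N2^{-L})^{2\prod_{l=1}^{L_*}(\beta_l\wedge1)} + N^{-\frac{2\beta^*}{r^*}},
\]
matching the first two terms of $\varsigma_n$; the $(N2^{-L})$ term is the contribution of the polynomial/product gadgets (driven by the number of layers $L$), while the $N^{-\beta^*/r^*}$ term is the classical localization rate (driven by the neuron count $N$), with $i^*=\argmin_i\beta_i^*/\tilde{r}_i$ selecting the bottleneck layer.

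Next I would collapse $\zeta_{n,\varepsilon,\delta}$. Taking $\varepsilon$ to be a fixed constant, so that $1+\varepsilon$ and $\varepsilon^{-1}$ are absorbed into $\lesssim$, and inserting $\tau\asymp LN$, the leading piece of $\zeta_{n,\varepsilon,\delta}$ is $\frac{LN}{n}\big(\log(L/\delta)+L\log(LN)\big)\big(n^{-1}\tr(\Gamma_n^2)+\sigma^2+1\big)$. Choosing $\delta\asymp n^{-2}$ turns $\log(L/\delta)$ into $\log(Ln^2)$, and rewriting $n^{-1}\tr(\Gamma_n^2)+\sigma^2+1\asymp n^{-1}(\tr(\Gamma_n^2)+n)$ over a common denominator reproduces exactly the third summand $\frac{(\tr(\Gamma_n^2)+n)(LN\log(Ln^2)+L^2N\log(LN))}{n^2}$. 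The remaining summand $\delta\big(n^{-1}\tr(\Gamma_n)+2\sqrt{n^{-1}\tr(\Gamma_n^2)}+3\sigma\big)$ is $O(n^{-2})$ after bounding the trace terms and is absorbed as lower order. Combining (i) and (ii) and reattaching $\Delta_n(\widehat{f}_{\textrm{local}})$ gives $R_n(\widehat{f}_{\textrm{local}},f_0)\lesssim\varsigma_n$.

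I expect the approximation step to be the main obstacle. The delicate part is to verify simultaneously that the explicit compositional network respects the bounded-weight constraint $\|W_j\|_\infty\le1$, the sparsity budget $\tau\lesssim LN$, the fixed depth $L$, and the sup-norm bound $F$, while still delivering the stated two-term rate. In particular, tracking how the H\"older exponents compound into the product $\prod_{l=1}^{L_*}(\beta_l\wedge1)$ and confirming that the stated lower bound on $N$ (through $\eta$ and the $(\beta_i+1)^{\tilde{r}_i}$ factors) leaves enough neurons in every layer is the main bookkeeping; the parameter tuning in step (ii) and the absorption of constants are routine by comparison.
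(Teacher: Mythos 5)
Your proposal is correct and follows essentially the same route as the paper: it invokes the oracle inequality of Theorem \ref{thm: oracle} with $\varepsilon$ a fixed constant and $\delta\asymp n^{-2}$, substitutes $\tau\lesssim LN$ to obtain the stochastic term, and handles the approximation error via a Schmidt--Hieber-style construction (localized Taylor polynomials for each H\"older block, composition-error propagation with exponents $\beta_l\wedge1$), which is exactly what the paper's Lemma \ref{lemma: approx} does by citing Theorem 5 and Lemma 3 of \citet{schmidt-hieber}. The bookkeeping you flag as the main obstacle (rescaling each block to $[0,1]$, truncating the subnetwork outputs, and rescaling the final network to satisfy $\|f\|_\infty\le F$) is precisely the content the paper supplies in that lemma.
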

The consistency of $\widehat{f}_{\textrm{local}}$ can be achieved by,  for instance,  letting $L\asymp\log(n), N\asymp n^{\frac{r^*}{2\beta^* + r^*}}, \tr(\Gamma_n^2) = o(n^{\frac{4\beta^*+r^*}{2\beta^*+r^*}}(\log n)^{-3})$, and $\Delta_n(\widehat{f}_{\textrm{local}}) =o(1)$, as a result of which $\varsigma _{n} \asymp n^{-\frac{2\beta^*}{2\beta^* + r^*}}(\log n)^3 + \Delta_n(\widehat{f}_{\textrm{local}}) = o(1)$.
As expected, the rate of convergence is affected by the intrinsic smoothness and intrinsic dimension of $\mathcal{CS}(L_*, \bm{r}, \bm{\tilde{r}}, \bm{\beta},\bm{a}, \bm{b}, \bm{C})$, the architecture of the neural network $\mathcal{F}(L, \bm{p}, \tau, F)$, and the magnitude of the spatial dependence.

\section{Simulation Study} \label{simulation}
In this section, we evaluate the finite sample performance of the proposed DNN estimator through a set of simulation studies. Two different simulation designs are considered, and for each design, we generate 100 independent data sets. In both settings, we use the same neural network architecture with length $L = 3$ and  width $N = 30$. Additionally, to prevent overfitting and control the sparsity of the network, we applied dropout with a probability of 0.2.


In the first design, the spatial domain of interest $\mathcal{S}$ is  in $\bbR$.  To be specific, we generate data from the following model
\begin{align*} \label{sim_1}
 y(s_i) = f_0(\bm{x}(s_i)) + e_1(s_i) + e_2(s_i), \,\,\, s_i \in  [0, D], ~ i = 1, 2, \ldots, n,
\end{align*}
and
\begin{align*}
 \bm{x}(s_i) = (x_{1}(s_i), \ldots, x_{5}(s_i))^\top = \left(s_i/D, \sin(10s_i/D), (s_i/D)^2, \exp(3s_i/D), (s_i/D+1)^{-1}\right)^\top \in \mathbb{R}^5,
\end{align*}
with the true mean function $f_0(\bm{x}(s_i))= \sum_{j=1}^5x_{j}(s_i)$.  The small-scale spatial variation $e_1(\cdot)$ is a zero-mean stationary and isotropic Gaussian process with an exponential covariance function $\gamma({s_i,s_j}) = \exp(-|s_i - s_j|/\rho)$ and the range parameter $\rho=0.1, 0.5, 1$. The measurement error $e_2(\cdot)$ is standard normal distributed and independent of $e_1(\cdot)$. It is worth mentioning that the covariates are location dependent.

We consider two different spatial domains: fixed domain and expanding domain.  For the fixed domain, $\mathcal{S}=[0, 1]$ is a fixed interval, i.e., $D=1$,  whereas for the expanding domain, the spatial domain $\mathcal{S}=[0, D]$ increases with the sample size $n$. The $n$ observations are equally spaced over the region. In both scenarios, we let $n=100, 200, 300$, and accordingly, $D=10, 20, 30$  in the expanding domain case.

\begin{figure}
\centering
\begin{subfigure}{.5\textwidth}
  \centering
  \includegraphics[width=.8\linewidth]{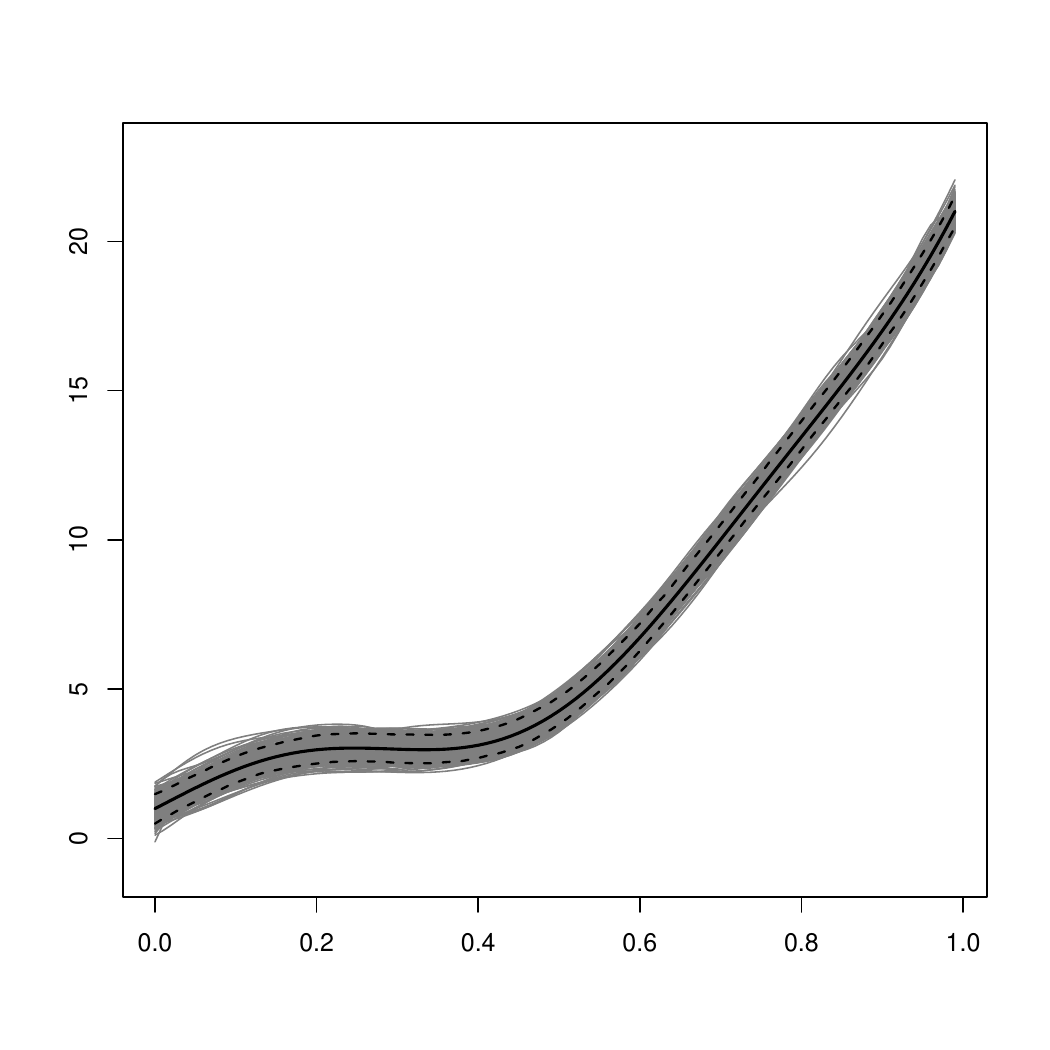}
  \caption{Fixed domain: $\mathcal{S}=[0, 1]$.}
  \label{fig:sim1_fixed}
\end{subfigure}%
\begin{subfigure}{.5\textwidth}
  \centering
  \includegraphics[width=.8\linewidth]{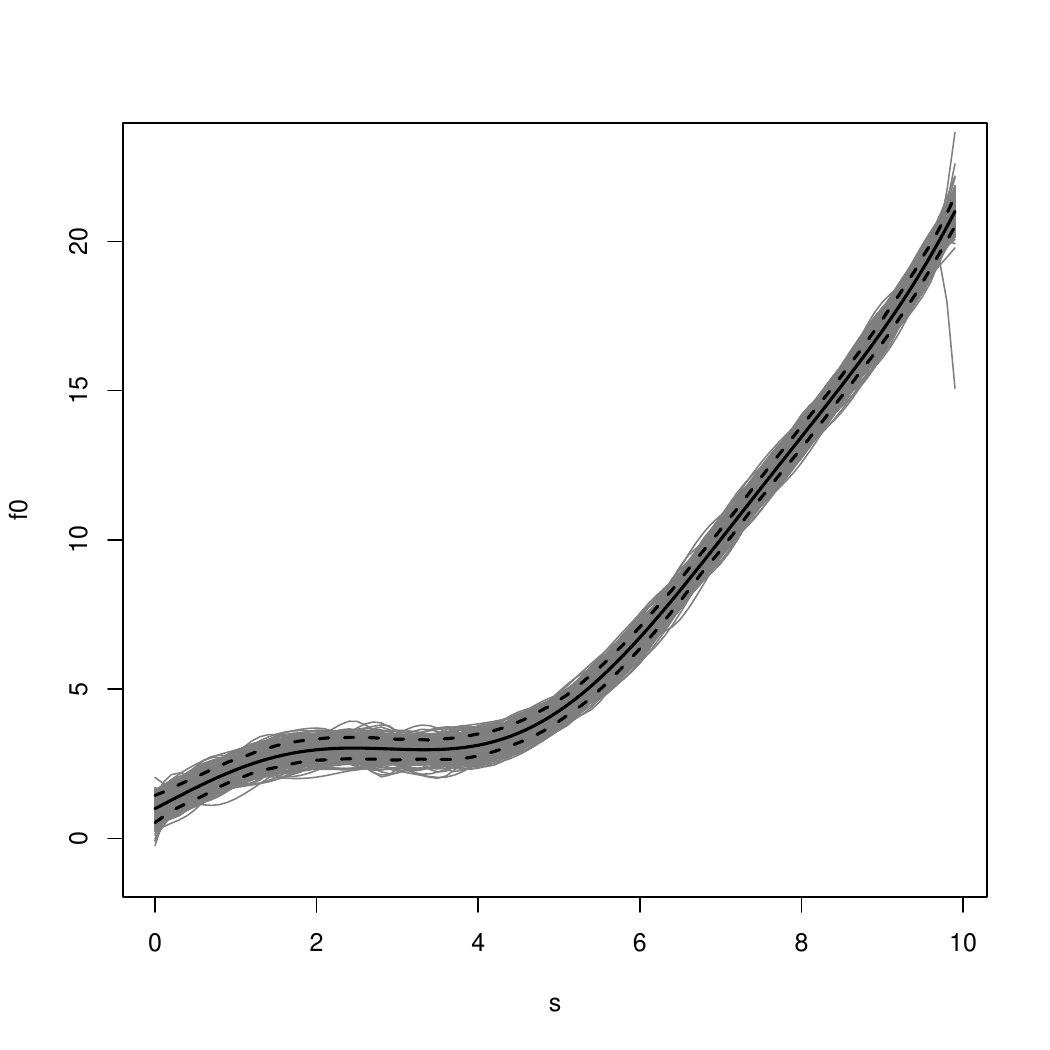}
  \caption{Expanding domain:  $\mathcal{S}=[0, 10]$.}
  \label{fig:sim1_expending}
\end{subfigure}
\caption{The estimated mean function and $95\%$ pointwise simulation intervals using our method in Simulation Design 1 with $n=100, \rho=0.5$. In both plots, the solid line is the true mean function, and the two dashed lines are the $95\%$ pointwise simulation intervals. The grey lines are the estimated mean functions from each replication.}
\label{fig:sim1}
\end{figure}


In the second design,   the mean function is defined on  $\bbR^2$, given by
\begin{align}
f_0(\bm{x}(\bm{s}_i)) =& \beta_1x_{1}(\bm{s}_i)x_{2}(\bm{s}_i) + \beta_2x_{2}(\bm{s}_i)^2\sin(x_{3}(\bm{s}_i))+\beta_3\exp(x_{4}(\bm{s}_i))\max(x_{5}(\bm{s}_i), 0) \nonumber \\
&+\frac{\beta_4}{\sign x_{4}(\bm{s}_i)(10+x_{5}(\bm{s}_i))} + \beta_5\tanh(x_{1}(\bm{s}_i)), \quad \bm{s}_i \in [0, D]^2,  ~ i = 1, \ldots, n,
\end{align}
where the coefficients $\beta_j$, $j=1, \ldots, 5$, are drawn from  $U(1, 2)$. The covariates at each location are generated from standard normal distributions with a cross-covariate correlation of 0.5 and the covariates at different locations are assumed to be independent. We further normalize each covariate to have zero mean and unit variance. The mean function $f_0$ is nonlinear, featuring interactions among the covariates.    We simulate $y(\bm{s}_i)$ according to \eqref{eq_spatial_regression} with  $e_1(\bm{s}_i)$ and $e_2(\bm{s}_i)$ similar to those in Design 1.  That is, $e_1(\bm{s}_i)$ is a zero-mean stationary and isotropic Gaussian process on $\bbR^2$ with an exponential covariance function $\gamma({\bm{s}_i,\bm{s}_j}) = \exp(-|\bm{s}_i - \bm{s}_j|/\rho)$ and $\rho=0.1, 0.5, 1$, and $e_2(\bm{s}_i) \sim N(0,1)$.

Similar to the first design, we consider two types of spatial domain: fixed domain, i.e., $D=1$ and expanding domain, i.e., $D = 10, 20, 30$.   In both cases, we have $n=100, 400, 900$, and all the locations are equally spaced over $[0, D]^2$.

\subsection{Estimating $f_0$ via other methods}

We also compare the proposed DNN estimator with various estimators in the literature.  The first estimator of $f_0$ is based on the Gaussian process-based spatially varying coefficient model (GP-SVC) which is given by
\begin{equation*}
y(\bm{s})= \beta_1(\bm{s})x_1(\bm{s}) + \ldots, +\beta_p(\bm{s})x_p(\bm{s}) + \epsilon, \,\,\, \epsilon \sim \mathcal{N}(0, \tau^2), \,\,\, \bm{s} \in \mathcal{S},
\end{equation*}
and the spatially varying coefficient $\beta_j(\cdot)$ is the sum of a fixed effect and a random effect. That is, $\beta_j(\bm{s}) = \mu_j + \eta_j(\bm{s})$, where $\mu_j$ is a non-random fixed effect and $\eta_j(\cdot)$ is a zero-mean Gaussian process with an isotropic covariance function $c(\cdot;\bm{\theta}_j)$. In this work, we use the popular Mat\'ern covariance function defined as
\[
c\left(r ; \rho, \nu, \sigma^2\right)=\sigma^2 \frac{2^{1-\nu}}{\Gamma(\nu)}\left(\sqrt{2 \nu} \frac{r}{\rho}\right)^\nu K_\nu\left(\sqrt{2 \nu} \frac{r}{\rho}\right),
\]
where $\rho>0$ is the range parameter, $\nu>0$ is the smoothness parameter, and $K_\nu(\cdot)$ is the modified Bessel function of second kind with order $\nu$. 

The second estimator is the Nadaraya-Watson (N-W) kernel estimator for spatially dependent data discussed in \citet{nonparametric_robinson2011}, which considers the following spatial regression model
    \[
    y(\bm{s}_i) = f_0(\bm{x}(\bm{s}_i)) + \sigma(\bm{x}(\bm{s}_i))V_i, \,\,\, V_i = \sum_{j=1}^\infty a_{ij}\epsilon_j, \,\,\, i=1, \ldots, n,
    \]
    where $f_0(\bm{x}):[0, 1]^d\rightarrow\bbR$ and $\sigma(\bm{x}):[0, 1]^d\rightarrow [0, \infty)$  are the mean and  variance functions, respectively,   $\epsilon_j$ are independent random variables with zero mean and unit variance, and $\sum_{j=1}^\infty a_{ij}^2 = 1$. \citet{nonparametric_robinson2011} introduces the following Nadaraya-Watson kernel estimator for $f_0$:
    $$
\hat{f}(\bm{x})=\frac{\hat{\nu}(\bm{x})}{\widehat{g}(\bm{x})},
$$
where
$$
\widehat{g}(\bm{x})=\frac{1}{n h_n^d} \sum_{i=1}^n K_i(\bm{x}), \quad \hat{\nu}(\bm{x})=\frac{1}{n h_h^d} \sum_{i=1}^n y_i K_i(\bm{x}),
$$
with
$$
K_i(\bm{x})=K\left(\frac{\bm{x}-\bm{x}(\bm{s}_i)}{h_n}\right),
$$
and $h_n$ is a scalar, positive bandwidth sequence satisfying $h_n \rightarrow 0$ as $n \rightarrow \infty$.

The third estimator of $f_0$ is based on the generalized additive model (GAM) mentioned in Example \ref{example}.  That is, we assume that \[
f_0(\bm{x}(\bm{s})) = \Psi\left(\sum_{j=1}^{d}g_j(x_j(\bm{s}))\right),
\]
where $g_j(\cdot): [0,1] \rightarrow\bbR$ and  $\Psi(\cdot):\bbR\rightarrow\bbR$ are some smooth functions. In this model,  spatial dependence is not assumed. The fourth estimator we consider is the linear regression (LR) model which assumes a linear relationship between the response variable and the predictor variables, without incorporating spatial dependence explicitly.

\subsection{Simulation Results}

To evaluate the performance of each method,  we generate additional $m=n/10$ observations at new locations, treated as a test set. Similar to \citet{SemiZhu}, we adopt mean squared estimation error (MSEE) and mean squared prediction error (MSPE) to evaluate the estimation and prediction performance, where MSEE and MSPE are defined as
$$\textrm{MSEE} = m^{-1}\sum_{i=1}^m(\widehat{f}(\bm{x}(s_i)) - f_0(\bm{x}(s_i)))^2,  \quad \textrm{MSPE} = m^{-1}\sum_{i=1}^m(\widehat{f}(\bm{x}(s_i)) - y(s_i))^2,$$
and $\widehat{f}(\bm{x}(s_i))$ is an estimator of $f_0(\bm{x}(s_i))$. The mean and standard deviation of $\textrm{MSEE}$ and $\textrm{MSPE}$ over the 100 independent replicates are summarized in Tables \ref{TableSim1_fixed_domain} -- \ref{TableSim2_expanding_domain}.

Tables \ref{TableSim1_fixed_domain} and \ref{TableSim1_expanding_domain} pertain to Simulation Design  1, for fixed and expanding domains, respectively. For each combination of the sample size $n$ and the spatial dependence $\rho$, we highlight the estimator in boldface that yields the smallest MSEE and MSPE.  Overall, GAM, GP-SVC, N-W, \KL{and LR} methods perform similar to each other.  The proposed DNN estimator produces a smaller estimation error and prediction error than the others in all cases except when $n=200$ and $\rho = 0.1$ in the fixed-domain case, GAM yields the smallest MSPE of 1.26.  But the MSPE produced by DNN is close. Despite that  spatial dependence has an adverse impact on the performance, when $n$ increases (and $D$ increases for the expanding-domain case), both estimation error and prediction error decrease as expected.

We depict in Figure \ref{fig:sim1} the estimated mean functions $\widehat{f}(\bm{x}(s))$ via our method with $n=100$ and $\rho=0.5$ from the 100 replications along with the $95\%$ pointwise confidence intervals for  both fixed and expanding domains. Here, the $95\%$ pointwise intervals are defined as
$$\left( 2^{-1}(\widehat{f}_{(2)}(\bm{x}(s_i))+\widehat{f}_{(3)}(\bm{x}(s_i))), 2^{-1}(\widehat{f}_{(97)}(\bm{x}(s_i))+\widehat{f}_{(98)}(\bm{x}(s_i))) \right), ~ i = 1, 2, \ldots, n,$$
where $\widehat{f}_{(k)}(\bm{x}(s_i))$ is the $k$th smallest value of $\{\widehat{f}_{[j]}(\bm{x}(s_i)): j=1, \ldots, 100\}$, and $\widehat{f}_{[j]}(\bm{x}(s_i))$ is the estimator of $f_0(\bm{x}(s_i))$ from the $j$th replicate.

Tables \ref{TableSim2_fixed_domain} and \ref{TableSim2_expanding_domain} report the  results for Simulation Design 2. For both fixed and expanding domains, our method performs the best among the \KL{five} methods and N-W comes next. This is mainly because \KL{LR}, GAM and GP-SVC treat $f_0$ to be linear and cannot handle complex interactions and nonlinear structures in $f_0$.

\begin{table}
\caption{Results of Simulation Design 1 with fixed domain: the averaged MSEE and MSPE over 100 replicates (with its standard deviation in parentheses) of various methods with different $n$ and $\rho$.} \label{TableSim1_fixed_domain}
\centering
\scalebox{0.8}{
\begin{tabular}{l|l|ll|ll|ll}
\hline
Fixed domain                &         &  $\rho=0.1$                         &      & $\rho=0.5$                        &      & $\rho=1$                         &      \\ \hline
\multicolumn{1}{c|}{$n$}     &   & \multicolumn{1}{l|}{MSEE} & MSPE & \multicolumn{1}{l|}{MSEE} & MSPE & \multicolumn{1}{l|}{MSEE} & MSPE \\ \hline
\multicolumn{1}{l|}{}      & GAM & \multicolumn{1}{l|}{0.92 (0.58)}  & 1.40 (0.69) & \multicolumn{1}{l|}{0.98 (0.78)} & 1.19 (0.40)    & \multicolumn{1}{l|}{1.05 (0.87)} & 1.17 (0.38)  \\ \cline{2-8}
\multicolumn{1}{c|}{$n=100$} & GP-SVC & \multicolumn{1}{l|}{0.87 (0.49)}& 1.37 (0.67)& \multicolumn{1}{l|}{0.92 (0.75)} & 1.15 (0.35)     & \multicolumn{1}{l|}{1.01 (0.82)}& 1.13 (0.36)     \\ \cline{2-8}
\multicolumn{1}{l|}{}      & N-W & \multicolumn{1}{l|}{0.89 (0.52)}  &  1.38 (0.67)& \multicolumn{1}{l|}{0.94 (0.76)} & 1.16 (0.38)     & \multicolumn{1}{l|}{1.03 (0.85)} & 1.15 (0.37)      \\ \cline{2-8}
\multicolumn{1}{l|}{}      & LR & \multicolumn{1}{l|}{0.93 (0.59)}  & 1.40 (0.69) & \multicolumn{1}{l|}{0.99 (0.80)} & 1.20 (0.39)    & \multicolumn{1}{l|}{1.06 (0.86)} & 1.18 (0.38) \\ \cline{2-8}
\multicolumn{1}{l|}{}      & DNN & \multicolumn{1}{l|}{{\bf 0.78} (0.41)} &  {\bf 1.32} (0.64) & \multicolumn{1}{l|}{{\bf 0.82} (0.71)} &{\bf 1.13} (0.35)     & \multicolumn{1}{l|}{{\bf 0.94} (0.79)}   & {\bf 1.10} (0.33)    \\ \hline
\multicolumn{1}{l|}{}      & GAM & \multicolumn{1}{l|}{0.87 (0.50)}  & {\bf1.26} (0.30) & \multicolumn{1}{l|}{0.93 (0.72)} & 1.12 (0.27)    & \multicolumn{1}{l|}{0.99 (0.77)} & 1.08 (0.24)     \\ \cline{2-8}
\multicolumn{1}{c|}{$n=200$} & GP-SVC & \multicolumn{1}{l|}{0.81 (0.42)}& 1.34 (0.37)& \multicolumn{1}{l|}{0.88 (0.74)} & 1.09 (0.29)     & \multicolumn{1}{l|}{0.95 (0.77)}& 1.09 (0.28)     \\ \cline{2-8}
\multicolumn{1}{l|}{}      & N-W & \multicolumn{1}{l|}{0.84 (0.38)}  &  1.33 (0.41)& \multicolumn{1}{l|}{0.91 (0.73)} & 1.10 (0.28)     & \multicolumn{1}{l|}{0.93 (0.75)} & 1.06 (0.25)      \\ \cline{2-8}
\multicolumn{1}{l|}{}      & LR & \multicolumn{1}{l|}{0.86 (0.50)}  & 1.28 (0.32) & \multicolumn{1}{l|}{0.95 (0.74)} & 1.14 (0.28)    & \multicolumn{1}{l|}{0.98 (0.76)} & 1.07 (0.24)     \\ \cline{2-8}
\multicolumn{1}{l|}{}      & DNN & \multicolumn{1}{l|}{{\bf0.69} (0.32)} &  1.27 (0.39) & \multicolumn{1}{l|}{{\bf0.71} (0.66)} & {\bf1.06} (0.26)     & \multicolumn{1}{l|}{{\bf0.78} (0.68)}   & {\bf 1.04} (0.29)    \\ \hline
\multicolumn{1}{l|}{}      & GAM & \multicolumn{1}{l|}{0.83 (0.47)}  & 1.19 (0.44) & \multicolumn{1}{l|}{0.88 (0.68)} & 1.09 (0.20)    & \multicolumn{1}{l|}{0.96 (0.66)} & 1.05 (0.19)     \\ \cline{2-8}
\multicolumn{1}{c|}{$n=300$} & GP-SVC & \multicolumn{1}{l|}{0.77 (0.38)}& 1.15 (0.37)& \multicolumn{1}{l|}{0.86 (0.71)} & 1.06 (0.21)     & \multicolumn{1}{l|}{0.91 (0.64)}& 1.05 (0.18)     \\ \cline{2-8}
\multicolumn{1}{l|}{}      & N-W & \multicolumn{1}{l|}{0.80 (0.36)}  &  1.13 (0.40)& \multicolumn{1}{l|}{0.88 (0.70)} & 1.07 (0.24)     & \multicolumn{1}{l|}{0.92 (0.66)} & 1.06 (0.21)      \\ \cline{2-8}
\multicolumn{1}{l|}{}      & LR & \multicolumn{1}{l|}{0.82 (0.46)}  & 1.17 (0.42) & \multicolumn{1}{l|}{0.87 (0.67)} & 1.10 (0.22)    & \multicolumn{1}{l|}{0.98 (0.69)} & 1.06 (0.20)     \\ \cline{2-8}
\multicolumn{1}{l|}{}      & DNN & \multicolumn{1}{l|}{{\bf0.58} (0.27)} &  {\bf1.07} (0.34) & \multicolumn{1}{l|}{{\bf0.63} (0.55)} &{\bf 1.01} (0.22)     & \multicolumn{1}{l|}{{\bf0.69} (0.55)}   &  {\bf1.01} (0.25)    \\ \hline
\end{tabular}
}
\end{table}

\begin{table}
\caption{Results of Simulation Design  1 with expanding domain (i.e., $D=10, 20, 30$): the averaged MSEE and MSPE over 100 replicates (with its standard deviation in parentheses) of various methods with different $n$ and $\rho$.} \label{TableSim1_expanding_domain}
\centering
\scalebox{0.8}{
\begin{tabular}{l|l|ll|ll|ll}
\hline
Expanding domain                &         &  $\rho=0.1$                         &      & $\rho=0.5$                        &      & $\rho=1$                         &      \\ \hline
\multicolumn{1}{l|}{$n$}     &  & \multicolumn{1}{l|}{MSEE} & MSPE & \multicolumn{1}{l|}{MSEE} & MSPE & \multicolumn{1}{l|}{MSEE} & MSPE \\ \hline
\multicolumn{1}{l|}{}      & GAM & \multicolumn{1}{l|}{0.35 (0.23)}  & 2.06 (0.67) & \multicolumn{1}{l|}{0.82 (0.38)} & 1.60 (0.59)    & \multicolumn{1}{l|}{0.98 (0.59)} & 1.42 (0.31)  \\ \cline{2-8}
\multicolumn{1}{l|}{$n=100, D=10$} & GP-SVC & \multicolumn{1}{l|}{0.33 (0.22)}  & 2.01 (0.61) & \multicolumn{1}{l|}{0.78 (0.36)} & 1.53 (0.55)    & \multicolumn{1}{l|}{0.94 (0.54)}   & 1.37 (0.29)     \\ \cline{2-8}
\multicolumn{1}{l|}{}      & N-W & \multicolumn{1}{l|}{0.38 (0.26)}  & 2.03 (0.65)  & \multicolumn{1}{l|}{0.81 (0.40)} & 1.57 (0.58)     & \multicolumn{1}{l|}{0.96 (0.57)} & 1.39 (0.29)     \\ \cline{2-8}
\multicolumn{1}{l|}{}      & LR & \multicolumn{1}{l|}{0.34(0.22)}  & 2.05 (0.66)  & \multicolumn{1}{l|}{0.81 (0.36)} & 1.58 (0.58)    & \multicolumn{1}{l|}{0.97 (0.58)} & 1.41 (0.30)     \\ \cline{2-8}
\multicolumn{1}{l|}{}      & DNN & \multicolumn{1}{l|}{ {\bf 0.26} (0.19)} & {\bf1.93} (0.55)    & \multicolumn{1}{l|}{{\bf 0.64} (0.37)} & {\bf1.44} (0.55)     & \multicolumn{1}{l|}{{\bf0.76} (0.44)} & {\bf 1.22} (0.26)    \\ \hline
\multicolumn{1}{l|}{}      & GAM & \multicolumn{1}{l|}{0.21 (0.14)}  & 1.91 (0.58) & \multicolumn{1}{l|}{0.66 (0.34)} & 1.51 (0.36)    & \multicolumn{1}{l|}{0.85 (0.44)} & 1.39 (0.39)     \\ \cline{2-8}
\multicolumn{1}{l|}{$n=200, D=20$} & GP-SVC & \multicolumn{1}{l|}{0.18 (0.14)}  & 1.89 (0.54)  & \multicolumn{1}{l|}{0.61 (0.33)} & 1.48 (0.39)     & \multicolumn{1}{l|}{0.81 (0.40)}   & 1.36 (0.41)     \\ \cline{2-8}
\multicolumn{1}{l|}{}      & N-W & \multicolumn{1}{l|}{0.20 (0.17)}     & 1.93 (0.61)   & \multicolumn{1}{l|}{0.63 (0.36)}     &  1.47 (0.37) & \multicolumn{1}{l|}{0.88 (0.48)}  & 1.40 (0.43)     \\ \cline{2-8}
\multicolumn{1}{l|}{}      & LR & \multicolumn{1}{l|}{0.20 (0.13)}  & 1.90 (0.56) & \multicolumn{1}{l|}{0.67 (0.35)} & 1.52 (0.35)    & \multicolumn{1}{l|}{0.86 (0.45)} & 1.39 (0.40)     \\ \cline{2-8}
\multicolumn{1}{l|}{}      & DNN & \multicolumn{1}{l|}{{\bf0.14} (0.11)}     & {\bf1.82} (0.55)   & \multicolumn{1}{l|}{{\bf0.43} (0.28)}    & {\bf1.33} (0.32)     & \multicolumn{1}{l|}{{\bf0.61} (0.37)}  & {\bf 1.29} (0.38)     \\ \hline
\multicolumn{1}{l|}{}      & GAM & \multicolumn{1}{l|}{0.11 (0.07)}  & 1.72 (0.39) & \multicolumn{1}{l|}{0.51 (0.29)} & 1.40 (0.31)    & \multicolumn{1}{l|}{0.70 (0.31)} & 1.30 (0.26)     \\ \cline{2-8}
\multicolumn{1}{l|}{$n=300, D=30$} & GP-SVC & \multicolumn{1}{l|}{0.13 (0.10)}     & 1.76 (0.41)     & \multicolumn{1}{l|}{0.56 (0.33)}     &    1.43 (0.36)  & \multicolumn{1}{l|}{0.74 (0.37)}     & 1.34 (0.30)     \\ \cline{2-8}
\multicolumn{1}{l|}{}      & N-W & \multicolumn{1}{l|}{0.13 (0.07)}     & 1.77 (0.44)     & \multicolumn{1}{l|}{0.53 (0.30)} & 1.44 (0.39)     & \multicolumn{1}{l|}{0.69 (0.33)}     & 1.29 (0.27)     \\ \cline{2-8}
\multicolumn{1}{l|}{}      & LR & \multicolumn{1}{l|}{0.12 (0.07)}  & 1.74 (0.40) & \multicolumn{1}{l|}{0.52 (0.31)} & 1.42 (0.33)    & \multicolumn{1}{l|}{0.72 (0.33)} & 1.32 (0.28)     \\ \cline{2-8}
\multicolumn{1}{l|}{}      & DNN & \multicolumn{1}{l|}{{\bf0.07} (0.09)}     &  {\bf1.63} (0.38)   & \multicolumn{1}{l|}{{\bf0.31} (0.23)}   & {\bf1.22} (0.23)     & \multicolumn{1}{l|}{{\bf0.43} (0.31)}  & {\bf1.10} (0.19)     \\ \hline
\end{tabular}
}
\end{table}

\begin{table}
\caption{Results of Simulation Design  2 with fixed domain: the averaged MSEE and MSPE over 100 replicates (with its standard deviation in parentheses) of various methods with different $n$ and $\rho$.} \label{TableSim2_fixed_domain}
\centering
\scalebox{0.8}{
\begin{tabular}{l|l|ll|ll|ll}
\hline
fixed-domain                &         &  $\rho=0.1$                         &      & $\rho=0.5$                        &      & $\rho=1$                         &      \\ \hline
\multicolumn{1}{l|}{$n$}     &  & \multicolumn{1}{l|}{MSEE} & MSPE & \multicolumn{1}{l|}{MSEE} & MSPE & \multicolumn{1}{l|}{MSEE} & MSPE \\ \hline
\multicolumn{1}{l|}{}      & GAM & \multicolumn{1}{l|}{0.87 (0.92)}  & 2.81 (1.19) & \multicolumn{1}{l|}{1.10 (2.40)} & 2.40 (1.53)    & \multicolumn{1}{l|}{1.23 (1.06)} & 2.16 (1.10)  \\ \cline{2-8}
\multicolumn{1}{l|}{$n=100$} & GP-SVC & \multicolumn{1}{l|}{0.93 (0.99)}  & 2.93 (1.23) & \multicolumn{1}{l|}{1.23 (2.54)}    & 2.59 (1.67)  & \multicolumn{1}{l|}{1.53 (1.33)} & 2.37 (1.26)  \\ \cline{2-8}
\multicolumn{1}{l|}{}      & N-W & \multicolumn{1}{l|}{0.73 (0.81)}  & 2.70 (1.15) & \multicolumn{1}{l|}{1.01 (2.16))}   & 2.30 (1.38)   & \multicolumn{1}{l|}{1.09 (1.01)}     &  2.09 (0.98)  \\ \cline{2-8}
\multicolumn{1}{l|}{}      & LR & \multicolumn{1}{l|}{1.16 (0.99)}  & 3.02 (1.25) & \multicolumn{1}{l|}{1.46 (2.62)} & 2.82 (1.76)    & \multicolumn{1}{l|}{1.88 (1.44)} & 2.66 (1.39)  \\ \cline{2-8}
\multicolumn{1}{l|}{}      & DNN & \multicolumn{1}{l|}{{\bf0.51} (0.60)}   &{\bf 2.27} (1.09)     & \multicolumn{1}{l|}{{\bf0.74} (1.11)}     & {\bf1.90} (1.02)  & \multicolumn{1}{l|}{{\bf0.83} (1.19)}     & {\bf1.99} (1.17)     \\ \hline
\multicolumn{1}{l|}{}      & GAM & \multicolumn{1}{l|}{0.44 (0.52)}  & 2.38 (0.58) & \multicolumn{1}{l|}{0.75 (0.65)} & 1.89 (0.42)    & \multicolumn{1}{l|}{0.92 (0.92)} & 1.69 (0.47)     \\ \cline{2-8}
\multicolumn{1}{l|}{$n=400$} & GP-SVC & \multicolumn{1}{l|}{0.50 (0.59)}     & 2.43 (0.66)   & \multicolumn{1}{l|}{0.82 (0.77)}     & 1.94 (0.47)     & \multicolumn{1}{l|}{1.00 (0.96)}   & 1.80 (0.53)     \\ \cline{2-8}
\multicolumn{1}{l|}{}      & N-W & \multicolumn{1}{l|}{0.39 (0.44)}     & 1.99 (0.58)   & \multicolumn{1}{l|}{0.68 (0.70)}     &  1.73 (0.41)    & \multicolumn{1}{l|}{0.83 (0.84))}     &  1.56 (0.41)    \\ \cline{2-8}
\multicolumn{1}{l|}{}      & LR & \multicolumn{1}{l|}{0.66 (0.52)}  & 2.67 (0.71) & \multicolumn{1}{l|}{0.95 (0.79)} & 2.05 (0.55)    & \multicolumn{1}{l|}{1.11 (0.97)} & 1.88 (0.57)     \\ \cline{2-8}
\multicolumn{1}{l|}{}      & DNN & \multicolumn{1}{l|}{{\bf0.22} (0.39)}     &  {\bf1.87} (0.49)    & \multicolumn{1}{l|}{{\bf 0.54} (0.61)}     & {\bf 1.57} (0.37)     & \multicolumn{1}{l|}{{\bf 0.68} (0.71)}     &  {\bf 1.41} (0.37)    \\ \hline
\multicolumn{1}{l|}{}      & GAM & \multicolumn{1}{l|}{0.31 (0.40)}  & 2.25 (0.53) & \multicolumn{1}{l|}{0.59 (0.53)} & 1.81 (0.36)    & \multicolumn{1}{l|}{0.80 (0.79)} & 1.65 (0.36)     \\ \cline{2-8}
\multicolumn{1}{l|}{$n=900$} & GP-SVC & \multicolumn{1}{l|}{0.38 (0.44)}     &  2.29 (0.58)    & \multicolumn{1}{l|}{0.66 (0.60)}     &   1.88 (0.39)   & \multicolumn{1}{l|}{0.88 (0.83)}     &  1.73 (0.40)    \\ \cline{2-8}
\multicolumn{1}{l|}{}      & N-W & \multicolumn{1}{l|}{0.25 (0.34)}     & 1.86 (0.49)     & \multicolumn{1}{l|}{0.51 (0.46)}     &   1.70 (0.31)   & \multicolumn{1}{l|}{0.71 (0.72))}     & 1.52 (0.34)     \\ \cline{2-8}
\multicolumn{1}{l|}{}      & LR & \multicolumn{1}{l|}{0.49 (0.48)}  & 2.33 (0.66) & \multicolumn{1}{l|}{0.88 (0.73)} & 1.95 (0.44)    & \multicolumn{1}{l|}{0.98 (0.86)} & 1.85 (0.42)    \\ \cline{2-8}
\multicolumn{1}{l|}{}      & DNN & \multicolumn{1}{l|}{{\bf 0.19} (0.27)}     & {\bf 1.70} ( 0.42)    & \multicolumn{1}{l|}{{\bf 0.28} (0.33)}     & {\bf 1.49} (0.29)     & \multicolumn{1}{l|}{{\bf 0.57} (0.59)}     & {\bf 1.33} (0.34)     \\ \hline
\end{tabular}
}
\end{table}

\begin{table}
\caption{Results of Simulation Design  2 with expanding domain ($D=10, 20, 30$): the averaged MSEE and MSPE over 100 replicates (with its standard deviation in parentheses) of various methods with different $n$ and $\rho$.} \label{TableSim2_expanding_domain}
\centering
\scalebox{0.8}{
\begin{tabular}{l|l|ll|ll|ll}
\hline
fixed-domain                &         &  $\rho=0.1$                         &      & $\rho=0.5$                        &      & $\rho=1$                         &      \\ \hline
\multicolumn{1}{l|}{$n$}     &  & \multicolumn{1}{l|}{MSEE} & MSPE & \multicolumn{1}{l|}{MSEE} & MSPE & \multicolumn{1}{l|}{MSEE} & MSPE \\ \hline
\multicolumn{1}{l|}{}      & GAM & \multicolumn{1}{l|}{0.75 (0.81)}  & 2.88 (1.04) & \multicolumn{1}{l|}{0.81 (0.65)} & 2.72 (0.94)    & \multicolumn{1}{l|}{0.90 (0.76)} & 2.65 (0.88)  \\ \cline{2-8}
\multicolumn{1}{l|}{$n=100, D=10$} & GP-SVC & \multicolumn{1}{l|}{0.84 (0.88)}     & 2.93 (1.11)     & \multicolumn{1}{l|}{0.89 (0.90)}     &  2.80 (0.97)     & \multicolumn{1}{l|}{0.96 (0.83))}     & 2.71 (0.91)     \\ \cline{2-8}
\multicolumn{1}{l|}{}      & N-W & \multicolumn{1}{l|}{0.66 (0.70)}     & 2.71 (1.00)     & \multicolumn{1}{l|}{0.71 (0.62)}     & 2.66 (0.90)     & \multicolumn{1}{l|}{0.82 (0.71))}     & 2.59 (0.81)     \\ \cline{2-8}
\multicolumn{1}{l|}{}      & LR & \multicolumn{1}{l|}{0.97 (0.91)}  & 2.95 (1.14) & \multicolumn{1}{l|}{1.06 (0.92)} & 2.92 (0.98)    & \multicolumn{1}{l|}{1.15 (0.86)} & 2.75 (0.95)  \\ \cline{2-8}
\multicolumn{1}{l|}{}      & DNN & \multicolumn{1}{l|}{{\bf 0.44} (0.49)}     &  {\bf  2.15} (1.02)   & \multicolumn{1}{l|}{{\bf 0.60} (1.03)}     &  {\bf 1.81} (0.99)    & \multicolumn{1}{l|}{{\bf 0.69} (1.07)}     & {\bf 1.76} (0.94)     \\ \hline
\multicolumn{1}{l|}{}      & GAM & \multicolumn{1}{l|}{0.32 (0.25)}  & 2.49 (0.44) & \multicolumn{1}{l|}{0.35 (0.24)} & 2.39 (0.46)    & \multicolumn{1}{l|}{0.40 (0.33)} & 2.32 (0.51)     \\ \cline{2-8}
\multicolumn{1}{l|}{$n=400, D=20$} & GP-SVC & \multicolumn{1}{l|}{0.40 (0.31)}     & 2.55 (0.48)     & \multicolumn{1}{l|}{0.49 (0.29)}   & 2.44 (0.50)     & \multicolumn{1}{l|}{0.54 (0.37))}     & 2.40 (0.55)     \\ \cline{2-8}
\multicolumn{1}{l|}{}      & N-W & \multicolumn{1}{l|}{0.28 (0.23)}     & 2.35 (0.41)     & \multicolumn{1}{l|}{0.31 (0.20)}     &  2.33 (0.41)    & \multicolumn{1}{l|}{0.35 (0.30)}     & 2.27 (0.47)     \\ \cline{2-8}
\multicolumn{1}{l|}{}      & LR & \multicolumn{1}{l|}{0.56 (0.35)} & 2.60 (0.54) & \multicolumn{1}{l|}{0.63 (0.33)} & 2.49 (0.49)   & \multicolumn{1}{l|}{0.71 (0.41)} & 2.46 (0.53)     \\ \cline{2-8}
\multicolumn{1}{l|}{}      & DNN & \multicolumn{1}{l|}{{\bf 0.18} (0.20)}     &  {\bf 2.20} (0.38)    & \multicolumn{1}{l|}{{\bf 0.24} (0.21)}     & {\bf 2.29} (0.38)     & \multicolumn{1}{l|}{{\bf 0.29} (0.24))}     & {\bf 2.20} (0.41)     \\ \hline
\multicolumn{1}{l|}{}      & GAM & \multicolumn{1}{l|}{0.24 (0.29)}  & 2.28 (0.33) & \multicolumn{1}{l|}{0.27 (0.16)} & 2.25 (0.27)    & \multicolumn{1}{l|}{0.31 (0.17)} & 2.26 (0.25)     \\ \cline{2-8}
\multicolumn{1}{l|}{$n=900, D=30$} & GP-SVC & \multicolumn{1}{l|}{0.31 (0.32))}     &  2.31 (0.35)    & \multicolumn{1}{l|}{0.37 (0.21)}   & 2.17 (0.25)     & \multicolumn{1}{l|}{0.41 (0.20)}     & 2.29 (0.28)     \\ \cline{2-8}
\multicolumn{1}{l|}{}      & N-W & \multicolumn{1}{l|}{0.21 (0.30)}     & 1.82 (0.46)     & \multicolumn{1}{l|}{0.26 (0.21)}     &    1.61 (0.28)  & \multicolumn{1}{l|}{0.29 (0.16)}     & 1.50 (0.30)     \\ \cline{2-8}
\multicolumn{1}{l|}{}      & LR & \multicolumn{1}{l|}{0.42 (0.35)}  & 2.37 (0.38) & \multicolumn{1}{l|}{0.53 (0.24)} & 2.33 (0.29)    & \multicolumn{1}{l|}{0.62 (0.23)} & 2.30 (0.31)     \\ \cline{2-8}
\multicolumn{1}{l|}{}      & DNN & \multicolumn{1}{l|}{{\bf 0.16} (0.22))}     & {\bf 1.71} (0.30))     & \multicolumn{1}{l|}{{\bf 0.19} (0.20)}     &  {\bf 1.58} (0.25)    & \multicolumn{1}{l|}{{\bf 0.23} (0.15))}      & {\bf 1.45} (0.28)     \\ \hline
\end{tabular}
}
\end{table}

\section{Data Example} \label{real_data}

In this section, we use the proposed DNN method to analyze California Housing data that are publicly available from the website \url{https://www.dcc.fc.up.pt/~ltorgo/Regression/cal_housing.html}. After removing  missing values, the dataset contains housing price information from $n = 20433$ block groups in California from the 1990 census, where a block group on average includes 1425.5 individuals living in a geographically compact area. To be specific, the dataset comprises median house values and six covariates of interest:  the median age of a house,  the total number of rooms, the total number of bedrooms,  population, the total number of households, and   the median income for households.
Figure \ref{figure:Cal_housing_hist} displays the histograms of the six covariates, from which one can observe that the covariates are all right skewed except for the median age of a house. Thus, we first apply the logarithm transform to the five covariates and then use min-max normalization to rescale all the six covariates so that the data are within the range $[0,1]$.

Figure \ref{figure:Cal_housing_scatter_covariates} shows the spatial distribution of the five log transformed  covariates (i.e., the total number of rooms, the total number of bedrooms,  population, the total number of households, and the median income for households) and the median age of a house.  We also depict in  Figure \ref{figure:Cal_housing_scatter} (the top panel) the map of the median house values in California.  The data exhibit a clear geographical pattern. Home values in the coastal region, especially the San Francisco Bay Area and South Coast, are higher than the other regions. Areas of high home values are always associated with high household income, dense population, large home size, and large household, which are clustered in the coastal region and Central Valley. Our objective is to explore the intricate relationship between the median house value and the six covariates by taking into account their spatial autocorrelation.

\begin{figure}[ht!]
  \centering
  \includegraphics[width=5in]{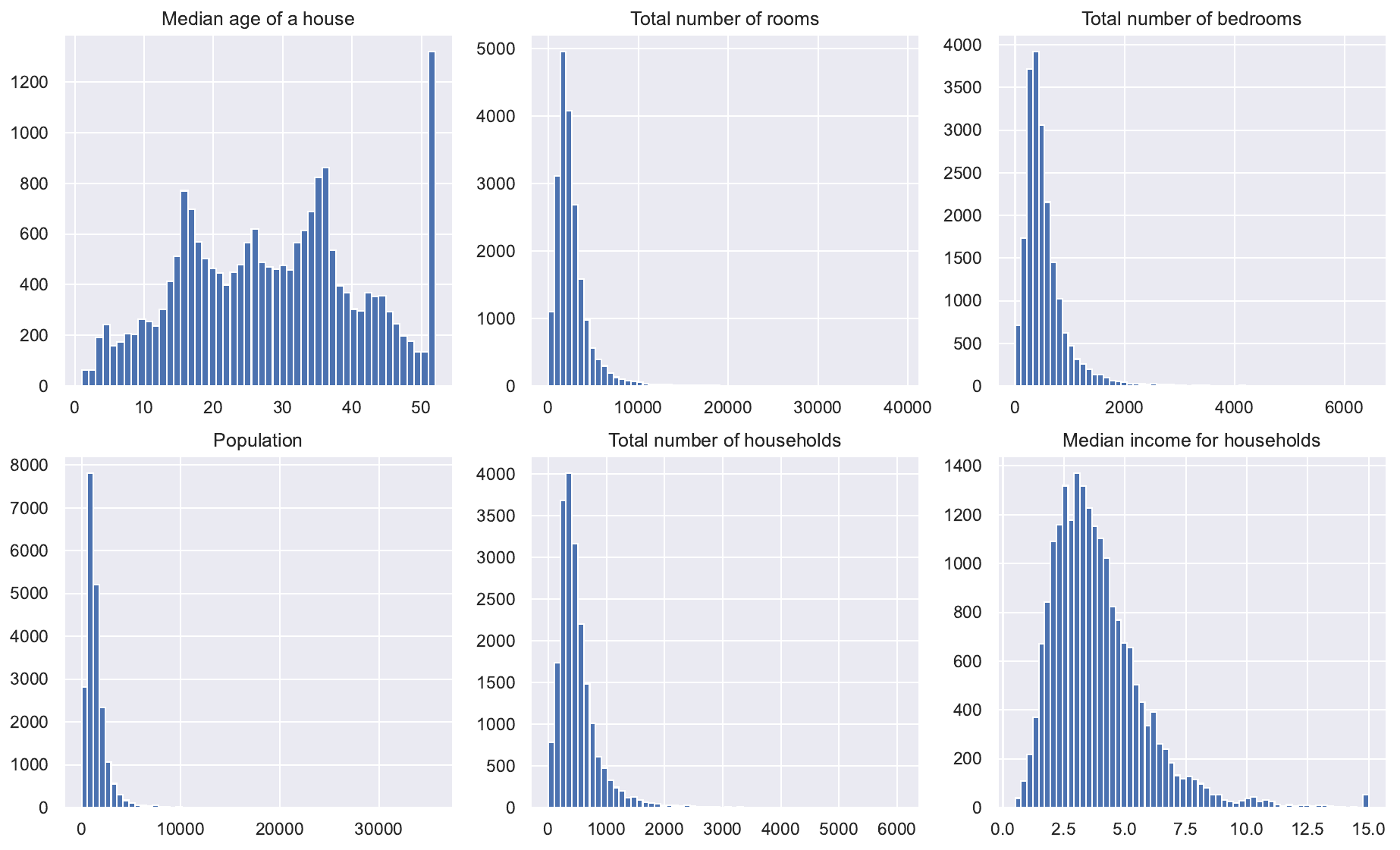}
  \caption{Histograms of six covariates in California housing data example.}  \label{figure:Cal_housing_hist}
\end{figure}

\begin{figure}[ht!]
  \centering
  \includegraphics[width=6in]{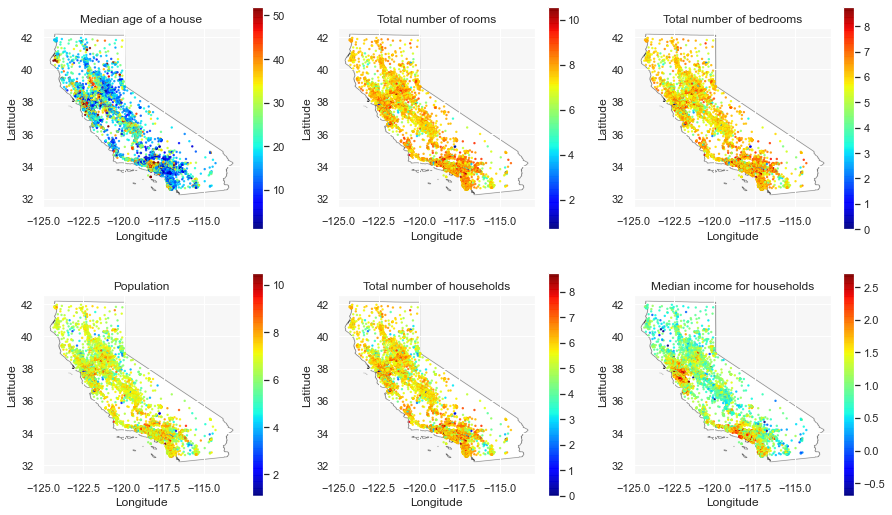}
  \caption{The map of six covariates in California housing data example.}  \label{figure:Cal_housing_scatter_covariates}
\end{figure}

\begin{figure}[ht!]
  \centering
  \includegraphics[width=5in]{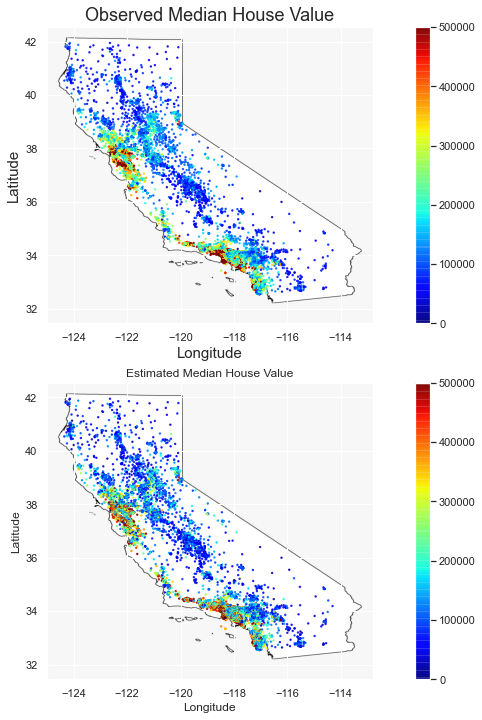}
  \caption{The top panel is the  map of 20433 observations and the corresponding median house value in California housing data example. The bottom panel is the estimated median house value.}  \label{figure:Cal_housing_scatter}
\end{figure}

Same as the simulation study, we estimate the mean function $f_0(\cdot)$ via four methods: DNN, GAM, GP-SVC, and N-W. We use the same neural network architecture as the simulation study, i.e., the length and width equal $L = 3$, $N = 30$, respectively, and dropout rate is set as 0.2 to avoid overfitting. To assess their performance, we compute the out-of-sample prediction error measured by MSPE based on 10-fold cross-validation, and the results are summarized in Table \ref{Tab:Real_Data}.  Consistent with the observations in the simulation study, the proposed DNN method yields a much more accurate prediction than the others. The bottom panel of Figure \ref{figure:Cal_housing_scatter} shows the estimated median house value using the DNN estimator, which exhibits a similar geographical pattern to the observations.

\begin{table}[]
\centering
\caption{Summary of the mean squared prediction error in California housing data example. }
\label{Tab:Real_Data}
\begin{tabular}{|c|c|c|c|c|c|}
\hline
Methods        & GAM & GP-SVC & N-W & LR& DNN  \\ \hline
MSPE ($\times 10^4$)   & 4.74    & 4.23       & 4.05      & 5.32      &  3.41       \\ \hline
\end{tabular}
\end{table}

\section{Conclusion} \label{conclusion}
In this study, we have ventured into the realm of regression analysis for spatially dependent data utilizing deep learning. Through meticulous consideration of the intricate interplay of spatial autocorrelation, our estimator has demonstrated its consistency. An intriguing path for future exploration unfolds in the domain of large-scale datasets. While the present study adeptly showcases the effectiveness of our approach via rigorous simulations, its true potential unfurls when confronted with the intricacies of real-world data. An illustrative instance is illuminated by the remote sensing data expounded in Ma et al.'s work \cite{ma2022scalable}. This dataset weaves together a tapestry of spatially referenced variables, collectively contributing to the intricate fabric of underlying relationships. The deployment of our model upon such expansive and diverse datasets holds the pledge of unveiling concealed patterns and amplifying comprehension of underlying processes.

Furthermore, our methodology's scope extends beyond predictive modeling, revealing a multitude of versatile applications. As expounded earlier, scenarios entailing interpolation challenges and gap-filling predicaments in remote sensing, alongside endeavors aimed at assessing the relative significance of covariates in ecological investigations, stand to derive substantial benefits from our model's nuanced capabilities. The vista of extending our framework to encompass spatiotemporal prediction broadens even further, beckoning exploration into more intricate, dynamic, and temporally evolving phenomena. For a more comprehensive review, refer to Wikle et al.'s comprehensive work \cite{wikle2019spatio}.

In conclusion, the journey into the domain of regression analysis for spatially dependent data endures as a continuous and captivating odyssey. As we confront the unique intricacies presented by expansive datasets and diverse problem domains, our proposed methodology emerges as a steadfast companion. Its role extends beyond mere contribution, shaping the advancement of statistical methodologies for spatial analysis.

\clearpage
\bibliographystyle{apalike}
\bibliography{ref}

\newpage
\newpage
\section{Appendix}

\subsection{Notation and Definition}
In this paper all vectors are column vectors, unless otherwise stated. Let $\|\bm{v}\|_2^2 = \bm{v}^{\top}\bm{v}$ for any vector $\bm{v}$, and $\|f\|_2 = \sqrt{\int f(x)^2dx}$ be the $L_2$ norm of a real-valued function $f(x)$. For two positive sequences ${a_n}$ and ${b_n}$, we write $a_n \lesssim b_n$ if there exists a positive constant $c$ such that $a_n \leq cb_n$ for all $n$, and $a_n \asymp b_n$ if $c^{-1}a_n \leq b_n \leq ca_n$  for some constant $c > 1$ and a sufficiently large $n$. Suppose that $\bm{x}=(x_1, \ldots, x_d)^{\top}$ is a $d$-dimensional vector. Let $|\bm{x}| = (|x_1|, \ldots, |x_d|)^{\top}$, $|\bm{x}|_\infty=\max_{i=1,\ldots,d}|x_i|$, and $|\bm{x}|_0=\sum_{i=1}^d \mathbbm{1}(x_i\neq0)$. For two $d-$dimensional vectors $\bm{x}$ and $\bm{y}$, we write $\bm{x} \lesssim \bm{y}$ if $x_i \lesssim y_i$ for $i=1,\ldots,d$. Let $\floor{x}$ be the largest number less than $x$ and $\ceil{x}$ be the smallest number greater than $x$. For a matrix $A=(a_{ij})$, let $\|A\|_\infty = \max_{ij}|a_{ij}|$ the max norm of $A$,    $\|A\|_0$ be the number of non-zero entries of $A$. Define $\|f\|_\infty$ as the sup-norm of a real-valued function $f$. We use $a\wedge b$ to represent the minimum of two numbers $a$ and $b$, while $a\vee b$ is the maximum of $a$ and $b$.



\begin{definition}[H\" older smoothness] \label{holder.smoothness}
A function $g: \mathbb{R}^{r_0} \to \mathbb{R}$ is said to be $(\beta, C)$-H\" older smooth for some positive constants $\beta$ and $C$, if for every $\bm{\gamma}=(\gamma_1, \ldots, \gamma_{r_0})\in \mathbb{N}^{r_0}$, the following two conditions hold:
\begin{eqnarray}
\sup_{\bm{x}\in
\mathbb{R}^{r_0}}\bigg|\frac{\partial^{\kappa}g}{\partial x_1^{\gamma_1}\ldots \partial x_{r_0}^{\gamma_{r_0}}}(\bm{x})\bigg|\leq C,
\quad     \textrm{ for }\kappa\leq \floor{\beta},\nonumber
\end{eqnarray}
and
\begin{eqnarray}
     \bigg|\frac{\partial^{\kappa}g}{\partial
x_1^{\gamma_1}\ldots \partial
x_{r_0}^{\gamma_{r_0}}}(\bm{x})-\frac{\partial^\kappa g}{\partial
x_1^{\gamma_1}\ldots \partial
x_{r_0}^{\gamma_{r_0}}}(\widetilde{\bm{x}})\bigg|\leq
C\|\bm{x}-\widetilde{\bm{x}}\|_2^{\beta-\floor{\beta}}, \quad     \textrm{ for } \kappa=\floor{\beta} \textrm{, } \bm{x},\widetilde{\bm{x}}\in
\mathbb{R}^{r_0},\nonumber
\end{eqnarray}
where $\kappa = \sum_{i=1}^{r_0}\gamma_{i}$. Moreover, we say $g$ is $(\infty, C)$-H\" older smooth if $g$ is $(\beta, C)$-H\" older smooth for all $\beta>0$.
\end{definition}


\subsection{Proof of Theorem \ref{thm: oracle}}

The proof of Theorem \ref{thm: oracle} requires a preliminary lemma. First, we define the $\delta$-cover of a function space $\mathcal{F}$ as a set $\widetilde{\mathcal{F}} \subset \mathcal{F}$ satisfying that following property: for any $f \in \mathcal{F}$, there exists a $\widetilde{f} \in \widetilde{\mathcal{F}}$ such that $\| \widetilde{f} - f\|_{\infty} \leq \delta$. Next, we define the $\delta$-covering number of $\mathcal{F}$ as
\[
\mathcal{N}(\delta, \mathcal{F}, \| \cdot \|_{\infty}) \doteq \min\{|\widetilde{\mathcal{F}}|: \widetilde{\mathcal{F}} \textrm{ is a $\delta$-cover of $\mathcal{F}$} \},
\]
where $|\mathcal{A}|$ means the number of distinct elements in set $\mathcal{A}$. In other words, $\mathcal{N}(\delta, \mathcal{F}, \| \cdot \|_{\infty})$ is the minimal number of $\| \cdot \|_{\infty}$-balls with radius $\delta$ that covers $\mathcal{F}$.   

\begin{lemma} \label{lemma: oracle}
Suppose that $f_0$ is the unknown true mean function in (\ref{eq_spatial_regression}). Let $\mathcal{F}$ be a function class such that $\{f_0\} \cup \mathcal{F} \subset \{f: [0, 1]^d \rightarrow [-F, F]\}$ for some $F \geq 1$. Then for all $\delta, \epsilon \in (0, 1]$ and $\widehat{f} \in \mathcal{F}$, the following inequality holds:
\begin{align*}
R_n(\widehat{f}, f_0)  \le&  (1+\varepsilon)\left( \inf_{\tilde{f}\in\mathcal{F}} R_n(\tilde{f}, f_0) + \Delta_n(\widehat{f})  + 2\delta\Big(n^{-1}\tr(\Gamma_n)  +   2\sqrt{n^{-1}\tr(\Gamma_n^2)} + 3\sigma \Big) \right) \\
&+ (1+\varepsilon) \frac{2F^2}{n\varepsilon}(3\log\mathcal{N} + 1)(n^{-1}\tr(\Gamma_n^2) + \sigma^2+1) ,
\end{align*}
where  $\mathcal{N} = \mathcal{N}(\delta, \mathcal{F}, \|\cdot\|_{\infty})$.
\end{lemma}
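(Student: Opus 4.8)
The plan is to follow the standard oracle-inequality route for empirical risk minimization, adapted to correlated Gaussian noise. Collect the two error sources into the vector $\bm{e}=\bm{e}_1+\bm{e}_2$, where $\bm{e}_i=(e_i(\bm{s}_1),\ldots,e_i(\bm{s}_n))^\top$; conditional on the covariates, $\bm{e}$ is mean-zero Gaussian with covariance $\Gamma_n+\sigma^2 I_n$. Writing $\|g\|_n^2=n^{-1}\sum_{i=1}^n g(\bm{x}(\bm{s}_i))^2$ and $\bm{g}=(g(\bm{x}(\bm{s}_i)))_i$, the empirical risk $\widehat{R}_n(f)=n^{-1}\sum_i(y(\bm{s}_i)-f(\bm{x}(\bm{s}_i)))^2$ decomposes as $\widehat{R}_n(f)=\|f-f_0\|_n^2-\tfrac{2}{n}\bm{e}^\top(\bm{f}-\bm{f}_0)+\|\bm{e}\|_n^2$. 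First I would invoke the definition of $\Delta_n(\widehat{f})$ together with the fact that $\widehat{f}_{\mathrm{global}}$ minimizes $\widehat{R}_n$ over $\mathcal{F}$, giving $\bm{E}_{f_0}[\widehat{R}_n(\widehat{f})]\le \bm{E}_{f_0}[\widehat{R}_n(\tilde f)]+\Delta_n(\widehat{f})$ for every $\tilde f\in\mathcal{F}$. The $\|\bm{e}\|_n^2$ terms cancel, so after taking expectations this reduces to the basic inequality
\[
R_n(\widehat{f},f_0)\le R_n(\tilde f,f_0)+\Delta_n(\widehat{f})+\bm{E}_{f_0}\!\left[\tfrac{2}{n}\bm{e}^\top(\widehat{\bm{f}}-\tilde{\bm{f}})\right],
\]
so that everything hinges on controlling the expected empirical process in the last term.

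Next I would discretize. Let $\widetilde{\mathcal{F}}$ be a minimal $\delta$-cover of $\mathcal{F}$ of cardinality $\mathcal{N}=\mathcal{N}(\delta,\mathcal{F},\|\cdot\|_\infty)$ and let $f^*\in\widetilde{\mathcal{F}}$ be a nearest element to $\widehat{f}$, so $\|\widehat{f}-f^*\|_\infty\le\delta$. Splitting the cross term as $\tfrac{2}{n}\bm{e}^\top(\widehat{\bm{f}}-\bm{f}^*)+\tfrac{2}{n}\bm{e}^\top(\bm{f}^*-\tilde{\bm{f}})$, the first (discretization) piece is handled by Gaussian moment bounds for the correlated vector $\bm{e}$: separating $\bm{e}_1$ from $\bm{e}_2$ and using $\|\widehat{f}-f^*\|_\infty\le\delta$, a careful computation should yield the bound $2\delta\big(n^{-1}\tr(\Gamma_n)+2\sqrt{n^{-1}\tr(\Gamma_n^2)}+3\sigma\big)$, with the $\tr(\Gamma_n)$ and $\tr(\Gamma_n^2)$ factors coming from the mean and the Gaussian-chaos fluctuation of the $\bm{e}_1$ contribution and the $3\sigma$ from the white-noise part $\bm{e}_2$.

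For the second (cover) piece I would use concentration over the finite set $\widetilde{\mathcal{F}}$. For each fixed pair the quantity $\tfrac{2}{n}\bm{e}^\top(\bm{f}^*-\tilde{\bm{f}})$ is, conditionally on the covariates, a mean-zero Gaussian whose fluctuation is governed by a quadratic functional of $\Gamma_n+\sigma^2 I_n$; using $\|f^*-\tilde f\|_\infty\le 2F$ this contributes the factor $n^{-1}\tr(\Gamma_n^2)+\sigma^2+1$, while a sub-Gaussian tail bound and a union bound over the at most $\mathcal{N}^2$ pairs produce the factor $\log\mathcal{N}$. The crucial device is the weighted Young inequality $2ab\le \varepsilon a^2+\varepsilon^{-1}b^2$: applied with $a$ proportional to $\|\widehat{f}-f_0\|_n$ it lets me absorb a $\tfrac{\varepsilon}{1+\varepsilon}$ multiple of $R_n(\widehat{f},f_0)$ back into the left-hand side, leaving the residual complexity contribution $\tfrac{2F^2}{n\varepsilon}(3\log\mathcal{N}+1)(n^{-1}\tr(\Gamma_n^2)+\sigma^2+1)$.

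Finally I would collect the three contributions, move the absorbed fraction of $R_n(\widehat{f},f_0)$ to the left, divide through by $\tfrac{1}{1+\varepsilon}$ (which multiplies every remaining term by $1+\varepsilon$), and take the infimum over $\tilde f\in\mathcal{F}$, yielding exactly the stated bound. I expect the main obstacle to be the spatial dependence: because $\bm{e}$ has the non-diagonal covariance $\Gamma_n+\sigma^2 I_n$, the empirical process is not a sum of independent summands, so both the discretization residual and the cover concentration must be tracked through quadratic functionals of $\Gamma_n$. Pinning down the correct powers of $\tr(\Gamma_n)$ and $\tr(\Gamma_n^2)$ — in particular matching $\sqrt{n^{-1}\tr(\Gamma_n^2)}$ in the $\delta$-term against $n^{-1}\tr(\Gamma_n^2)$ in the complexity term — and verifying that the weighting genuinely absorbs the cross term rather than leaving an uncontrolled $R_n(\widehat{f},f_0)$ on the right is the delicate bookkeeping at the heart of the argument.
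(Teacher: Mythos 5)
Your proposal follows essentially the same route as the paper's proof: the basic inequality obtained from $\Delta_n(\widehat f)$ and the minimizing property of $\widehat f_{\textrm{global}}$, discretization by a $\delta$-cover, a Gaussian maximal inequality over the cover centers (the paper invokes Lemma C.1 of Schmidt--Hieber for $\bm{E}[\max_j\eta_j^2]\le 3\log\mathcal N+1$ rather than your union bound over pairs, and allocates part of the $\delta$-term to the cover piece rather than the discretization piece, but the assembled constants agree), and absorption of the $\sqrt{R_n(\widehat f,f_0)}$ cross term via the weighted inequality. The only detail you omit is the separate, trivial treatment of the case $\log\mathcal N>n$, where the stated right-hand side already exceeds $4F^2\ge R_n(\widehat f,f_0)$; otherwise the bookkeeping matches the paper's.
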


\begin{proof}
Let $\Delta_n = \Delta_n(\widehat{f})$. For any fixed $ f \in \mathcal{F}$, we have $\bm{E}_{f_0}\Big[n^{-1}\sum_{i=1}^{n}(y(\bm{s}_i) -  f(\bm{x}(\bm{s}_i))^2 - \inf_{\tilde f \in \mathcal{F}}n^{-1}\sum_{i=1}^{n}(y(\bm{s}_i) - \tilde f(\bm{x}(\bm{s}_i))^2\Big]\geq0$. Therefore,
\begin{align*}
&\bm{E}_{f_0}\Big[\frac{1}{n}\sum_{i=1}^{n}(y(\bm{s}_i) - \widehat{f}(\bm{x}(\bm{s}_i))^2\Big]\\
\leq & \bm{E}_{f _0}\Big[\frac{1}{n}\sum_{i=1}^{n}(y(\bm{s}_i) - \widehat{f}(\bm{x}(\bm{s}_i))^2 +\frac{1}{n}\sum_{i=1}^{n}(y(\bm{s}_i) -  f(\bm{x}(\bm{s}_i))^2 - \inf_{\tilde f \in \mathcal{F}}\frac{1}{n}\sum_{i=1}^{n}(y(\bm{s}_i) - \tilde f(\bm{x}(\bm{s}_i))^2\Big]\\
=& \bm{E}_{f _0}\Big[\frac{1}{n}\sum_{i=1}^{n}(y(\bm{s}_i) - {f}(\bm{x}(\bm{s}_i)))^2\Big] + \Delta_n.
\end{align*}
Let $\epsilon_i = e_1(\bm{s}_i)+e_2(\bm{s}_i)$.  Furthermore, we have
\begin{align*}
R_n(\widehat{f}, f_0) &= \frac{1}{n}\sum_{i=1}^n\bm{E}_{f_0}\big[\big(\widehat{f}(\bm{x}(\bm{s}_i)) - f_0(\bm{x}(\bm{s}_i))\big)^2\big] \\
&=\frac{1}{n}\sum_{i=1}^n \bm{E}_{f_0}\big[\big(\widehat{f}(\bm{x}(\bm{s}_i)) - y(\bm{s}_i) + y(\bm{s}_i) - f_0(\bm{x}(\bm{s}_i))\big)^2\big]\\
&= \frac{1}{n}\sum_{i=1}^n\bm{E}_{f_0}\big[\big(\widehat{f}(\bm{x}(\bm{s}_i)) - y(\bm{s}_i)\big)^2 + \epsilon_i^2+ 2\big(\widehat{f}(x(\bm{s}_i))-y(\bm{s}_i)\big)\epsilon_i\big]\\
&\leq \frac{1}{n}\sum_{i=1}^n\bm{E}_{f _0}\Big[(y(\bm{s}_i) - {f}(\bm{x}(\bm{s}_i)))^2 - \epsilon_i^2+ 2\widehat{f}(x(\bm{s}_i))\epsilon_i \Big] + \Delta_n\\
&= \frac{1}{n}\sum_{i=1}^n{\bm{E}_{f _0}\big[(y(\bm{s}_i) - f_0(\bm{x}(\bm{s}_i)) + f_0(\bm{x}(\bm{s}_i)) - {f}(\bm{x}(\bm{s}_i)))^2 - \epsilon_i^2+ 2\widehat{f}(x(\bm{s}_i))\epsilon_i \big] + \Delta_n}\\
&= \frac{1}{n}\sum_{i=1}^n\bm{E}_{f _0}\big[(f_0(\bm{x}(\bm{s}_i)) - {f}(\bm{x}(\bm{s}_i)))^2 + 2\widehat{f}(\bm{x}(\bm{s}_i))\epsilon_i \big] + \Delta_n . \numberthis \label{eqn: bnd R_n}
\end{align*}

Next, we will find an upper bound for $\bm{E}_{f _0}\big[\frac{2}{n}\sum_{i=1}^n\widehat{f}(x(\bm{s}_i))\epsilon_i \big]$. By the definition of the $\delta$-cover of a function space $\mathcal{F}$ and the $\delta$-covering number, we denote the centers of the balls by $f_j$, $j=1, 2, \ldots, \mathcal{N}$; and there exists $f_{j*}$ such that  $\|\widehat{f} - f_{j*}\|_{\infty} \leq \delta$. Together with the fact that $\bm{E}\big[  f_0(\bm{x}(\bm{s}_i)) \epsilon_i \big] = 0$, we have
\begin{align*}
&\bm{E}\big[\frac{2}{n}\sum_{i=1}^n\widehat{f}(\bm{x}(\bm{s}_i))\epsilon_i \big]\\
=& \bm{E}\big[\frac{2}{n}\sum_{i=1}^n\big(\widehat{f}(\bm{x}(\bm{s}_i))-f_{j*}(\bm{x}(\bm{s}_i)) + f_{j*}(\bm{x}(\bm{s}_i) - f_0(\bm{x}(\bm{s}_i))\big)\epsilon_i \big]\\
\le& \bm{E}\Big|\frac{2}{n}\sum_{i=1}^n\big(\widehat{f}(\bm{x}(\bm{s}_i))-f_{j*}(\bm{x}(\bm{s}_i))\big)\epsilon_i \Big| + \bm{E}\Big|\frac{2}{n}\sum_{i=1}^n\big(f_{j*}(\bm{x}(\bm{s}_i)) - f_0(\bm{x}(\bm{s}_i))\big)\epsilon_i \Big|\\
\le& \frac{2\delta}{n}\bm{E}\Big[\sum_{i=1}^n\big|\epsilon_i \big|\Big] + \frac{2}{n}\bm{E}\Big| \sum_{i=1}^n\big(f_{j*}(\bm{x}(\bm{s}_i)) - f_0(\bm{x}(\bm{s}_i))\big)\epsilon_i \Big| \\
\doteq& T_1 + T_2. \numberthis \label{eqn: bnd T_1 T_2}
\end{align*}
It is easy to see that $T_1 \le 2\delta(n^{-1}\tr\Gamma_n + \sigma)$.
For the second term $T_2$, notice that, conditionally on $\{\bm{x}(\bm{s}_1),  \ldots, \bm{x}(\bm{s}_n)\}$,
\[
\eta_j \doteq \frac{\sum_{i=1}^n\big(f_j(\bm{x}(\bm{s}_i)) - f_0(\bm{x}(\bm{s}_i))\big)\epsilon_i }{\sqrt{\bba_j^{\top}\Gamma_n \bba_j + n\sigma^2 \|f_j - f_0\|_n^2 }}
\]
follows $\mathcal{N}(0, 1)$ where $\bba_j = (f_j(\bm{x}(\bm{s}_1)) - f_0(\bm{x}(\bm{s}_1)), \ldots, f_j(\bm{x}(\bm{s}_n)) - f_0(\bm{x}(\bm{s}_n)))^{\top}$, $\| f \|_n^2 = n^{-1}\sum_{i=1}^nf(\bm{x}(\bm{s}_i))^2$.
From Lemma C.1 of \cite{schmidt-hieber},  $\bm{E}_{f _0}\big[ \max_{j=1, \ldots, \mathcal{N}}\eta_j^2 | \bm{x}(\bm{s}_1),  \ldots, \bm{x}(\bm{s}_n) \big] \le 3\log \mathcal{N} + 1$.  Consequently,
\begin{align*}
 T_2 =& \frac{2}{n} \bm{E}\Big|\eta_{j*}  \sqrt{\bba_{j*}^{\top}\Gamma_n \bba_{j*} + n\sigma^2 \|f_{j*} - f_0\|_n^2 } \Big|  \\
\le& \frac{2}{n} \bm{E}\Big( |\eta_{j*}|  \sqrt{(\tr(\Gamma_n^2) + n\sigma^2) \|f_{j*} - f_0\|_n^2 } \Big) \\
\le& \frac{2\sqrt{\tr(\Gamma_n^2) + n\sigma^2} }{n} \bm{E}\Big( |\eta_{j*}| (\|\hat f - f_0\|_n  + \delta)\Big)  \\
\le& \frac{2\sqrt{\tr(\Gamma_n^2) + n\sigma^2} }{n}\sqrt{3\log \mathcal{N} + 1}  \Big( \sqrt{R_n(\widehat{f}, f_0)}  + \delta \Big)
\end{align*}


Together with  (\ref{eqn: bnd R_n}) and  (\ref{eqn: bnd T_1 T_2}), we have
\begin{align*}
R_n(\widehat{f}, f_0)  \le& R_n(f, f_0) + \Delta_n  + 2\delta(n^{-1}\tr\Gamma_n + \sigma) +  \frac{2\sqrt{\tr(\Gamma_n^2) + n\sigma^2} }{n}\sqrt{3\log\mathcal{N} + 1}  \Big( \sqrt{R_n(\widehat{f}, f_0)}  + \delta \Big) .
\end{align*}

If $\log\mathcal{N}\le n$, then
\begin{align*}
R_n(\widehat{f}, f_0)  \le& R_n(f, f_0) + \Delta_n  + 2\delta\Big(n^{-1}\tr(\Gamma_n) + \sigma +   2\sqrt{n^{-1}\tr(\Gamma_n^2) + \sigma^2}\Big) \\
&+ \frac{2\sqrt{\tr(\Gamma_n^2) + n\sigma^2} }{n}\sqrt{3\log\mathcal{N} + 1}\sqrt{R_n(\widehat{f}, f_0)} .
\end{align*}
Applying the inequality (43) in \cite{schmidt-hieber}, we have, for any $0<\varepsilon\le 1$,
\begin{align*}
R_n(\widehat{f}, f_0)
\le& (1+\varepsilon)\left( R_n(f, f_0) + \Delta_n  + 2\delta\Big(n^{-1}\tr(\Gamma_n) + \sigma +   2\sqrt{n^{-1}\tr(\Gamma_n^2) + \sigma^2}\Big) \right) \\
&+ \frac{(1+\varepsilon)^2}{\varepsilon} \frac{1}{n^2}(3\log\mathcal{N} + 1)(\tr(\Gamma_n^2) + n\sigma^2) \\
\le& (1+\varepsilon)\left( R_n(f, f_0) + \Delta_n  + 2\delta\Big(n^{-1}\tr(\Gamma_n)  +   2\sqrt{n^{-1}\tr(\Gamma_n^2)} + 3\sigma \Big) \right) \\
&+ (1+\varepsilon) \frac{2F^2}{n\varepsilon}(3\log\mathcal{N} + 1)(n^{-1}\tr(\Gamma_n^2) + \sigma^2 +1).
\end{align*}

For $\log\mathcal{N}> n$,  $R_n(\widehat{f}, f_0) = \frac{1}{n}\sum_{i=1}^n\bm{E}_{f_0}\big[\big(\widehat{f}(\bm{x}(\bm{s}_i)) - f_0(\bm{x}(\bm{s}_i))\big)^2\big] \le 4F^2$ and
\begin{align*}
& (1+\varepsilon)\left( R_n(f, f_0) + \Delta_n  + 2\delta\Big(n^{-1}\tr(\Gamma_n)  +   2\sqrt{n^{-1}\tr(\Gamma_n^2)} + 3\sigma \Big) \right) \\
&+ (1+\varepsilon) \frac{2F^2}{n\varepsilon}(3\log\mathcal{N} + 1)(n^{-1}\tr(\Gamma_n^2) + \sigma^2+1) \\
>&   \frac{2F^2}{n}(3n+ 1)  > 6F^2.
\end{align*}
Thus,
\begin{align*}
R_n(\widehat{f}, f_0)  \le&  (1+\varepsilon)\left( R_n(f, f_0) + \Delta_n  + 2\delta\Big(n^{-1}\tr(\Gamma_n)  +  2\sqrt{n^{-1}\tr(\Gamma_n^2)} + 3\sigma \Big) \right) \\
&+ (1+\varepsilon) \frac{2F^2}{n\varepsilon}(3\log\mathcal{N} + 1)(n^{-1}\tr(\Gamma_n^2) + \sigma^2+1).
\end{align*}
Since the above inequality holds true for any $f\in\mathcal{F}$, we can prove the result by letting $f=\arginf_{\tilde{f}\in\mathcal{F}}R_n(\tilde{f}, f_0)$.
\end{proof}

\noindent\textbf{\emph{Proof of Theorem \ref{thm: oracle}}}: It follows from Lemma 5 and Remark 1 of \citet{schmidt-hieber} that
\begin{align*}
\log\mathcal{N} = \log\mathcal{N}(\delta, \mathcal{F}(L, \bm{p}, \tau, F), \| \cdot \|_{\infty})
\le& (1+\tau)\log(2^{5+2L}\delta^{-1}(L+1)\tau^{2L}d^2).
\end{align*}
Because $F\ge1$ and $0<\varepsilon\le1$,  we have
\begin{align*}
	R_n(\widehat{f}, f_0)  \lesssim&  (1+\varepsilon)\left( \inf_{\tilde{f} \in \mathcal{F}(L, \bm{p}, \tau, F)} \|\tilde{f}-f_0\|^2_\infty + \Delta_n(\widehat{f})  + \varsigma_{n, \varepsilon, \delta}  \right) ,
\end{align*}
where
\begin{align*}
\varsigma_{n, \varepsilon, \delta} \asymp& \frac{1}{\varepsilon} \left[ \delta  \Big(n^{-1}\tr(\Gamma_n)  +  2\sqrt{n^{-1}\tr(\Gamma_n^2)} + 3\sigma \Big) + \frac{\tau}{n} \left(\log(L/\delta) + L\log\tau \right) (n^{-1}\tr(\Gamma_n^2) + \sigma^2+1)      \right].
\end{align*}
$\square$

\subsection{Proof of Theorem \ref{thm.main}}

\begin{lemma} \label{lemma: approx}
For any $f: \mathbb{R}^{d} \to \mathbb{R} \in \mathcal{CS}(L_*, \bm{r}, \bm{\tilde{r}}, \bm{\beta},\bm{a}, \bm{b}, \bm{C})$,  $m\in\mathbb{Z}^{+}$, and $N \geq \max_{i=0,\ldots,L_*}(\beta_i+1)^{\tilde{r}_i} \vee (\tilde{C}_i+1)e^{\tilde{r}_i}$, there exists a neural network
\[
f_* \in \mathcal{F}(L, (d, 6\eta N,\ldots,6\eta N, 1), \sum_{i=0}^{L_*}r_{i+1}(\tau_i + 4), \infty),
\]
 such that
\[
\| f_* - f\|_{\infty} \leq C_{L_*}\prod_{l=0}^{L_*-1}(2C_l)^{\beta_{l+1}}\sum_{i=0}^{L_*}\left((2\tilde{C}_i+1)(1+\tilde{r}^2_i + \beta_i^2)6^{\tilde{r}}N2^{-m} + \tilde{C}_i3^{\beta_i}N^{-\frac{\beta_i}{\tilde{r}_i}}\right)^{\prod_{l=i+1}^{L_*}\beta_l\wedge1} ,
\]
where
\begin{align*}
\tilde{C}_i &= \sum_{k=0}^{i} C_k\frac{b_k - a_k}{b_{k+1}-a_{k+1}}, ~ i = 0,\ldots,L_*-1, \quad
\tilde{C}_{L_*} = \sum_{k=0}^{L_*}C_k\frac{b_k - a_k}{b_{k+1}-a_{k+1}}+b_{L_*} - a_{L_*}\\
L &= 3L_* + \sum_{i=0}^{L_*}L_i, \quad \text{with }
L_i = 8 + (m+5)(1+\ceil{\log_2(\tilde{r}_i\vee\beta_i)}),\\
\tau_i &\leq 141(\tilde{r}_i + \beta_i + 1)^{3+\tilde{r}_i}N(m+6), ~ i = 0,\ldots,L_*, \\
\eta &= \max_{i=0,\ldots, L_*}(r_{i+1}(\tilde{r}_i + \ceil{\beta_i})).
\end{align*}
\end{lemma}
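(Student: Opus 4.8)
The plan is to follow the standard \emph{approximate-each-layer-then-compose} strategy for deep ReLU networks approximating compositional H\"older functions, adapting the construction of \citet{schmidt-hieber}. Write $f = \bm{g}_{L_*}\circ\cdots\circ\bm{g}_0$ as in Assumption \ref{CompositionalStructure}, where each coordinate $g_{i,j}$ is $(\beta_i, C_i)$-H\"older smooth and depends on only $\tilde r_i$ of its inputs. The first step is to reduce the problem to approximating a single H\"older-smooth function on the unit cube: after an affine change of variables mapping $[a_i,b_i]^{\tilde r_i}$ onto $[0,1]^{\tilde r_i}$, each $g_{i,j}$ becomes a H\"older function on $[0,1]^{\tilde r_i}$ with a rescaled constant, and I would track how these rescalings accumulate along the composition, together with the contribution of clipping intermediate outputs back into their domains, to produce the bookkeeping constants $\tilde C_i = \sum_{k=0}^i C_k (b_k-a_k)/(b_{k+1}-a_{k+1})$.

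Second, I would invoke the base approximation result for a single $(\beta, C)$-H\"older function $h$ on $[0,1]^{\tilde r}$. The construction partitions the cube into $\sim N$ cells, fits the degree-$\lfloor\beta\rfloor$ local Taylor polynomial on each cell, and realizes the partition-of-unity weights and the polynomial evaluations by a ReLU network built from a "multiplication gadget." The two error sources are the Taylor remainder, of order $N^{-\beta/\tilde r}$, and the error from approximating the required products and monomials by the gadget that uses $m$ layers, contributing a term of order $N2^{-m}$; together these account for the summand $(2\tilde C_i+1)(1+\tilde r_i^2+\beta_i^2)6^{\tilde r_i}N2^{-m} + \tilde C_i 3^{\beta_i}N^{-\beta_i/\tilde r_i}$. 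The depth $L_i = 8 + (m+5)(1+\lceil\log_2(\tilde r_i\vee\beta_i)\rceil)$ reflects that each product costs $\sim(m+5)$ layers and that forming a monomial of degree up to $\beta_i$ in $\tilde r_i$ variables needs a binary tree of depth $\lceil\log_2(\tilde r_i\vee\beta_i)\rceil$; counting active weights in this construction yields the per-function sparsity $\tau_i$ and the width $6\eta N$.

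Third, I would assemble the full network by stacking the $L_*+1$ layer-networks in series, padding with identity/ReLU "glue" layers (the projection property $\sigma\circ\sigma=\sigma$ lets a ReLU network carry values forward unchanged) so that the $r_{i+1}$-dimensional output of layer $i$ feeds layer $i+1$ through $r_{i+1}$ parallel sub-networks. This accounts for the additive $3L_*$ in $L = 3L_* + \sum_i L_i$ and for the total sparsity $\sum_{i=0}^{L_*} r_{i+1}(\tau_i+4)$. Care is needed to ensure each intermediate output stays inside $[a_{i+1},b_{i+1}]^{r_{i+1}}$, on which the next layer's approximation is valid; I would enforce this by a ReLU clipping step absorbed into $\tilde C_i$.

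Finally, and this is where the main difficulty lies, I would control the error propagation through the composition. Writing $\tilde{\bm{g}}_i$ for the network approximating $\bm{g}_i$ and telescoping $\|\bm{g}_{L_*}\circ\cdots\circ\bm{g}_0 - \tilde{\bm{g}}_{L_*}\circ\cdots\circ\tilde{\bm{g}}_0\|_\infty$ layer by layer, each application of a $(\beta_l, C_l)$-H\"older map amplifies an input error $\epsilon$ to $C_l\epsilon^{\beta_l}$ when $\beta_l\le 1$ and acts in a locally Lipschitz fashion when $\beta_l>1$; iterating yields the nested exponents $\prod_{l=i+1}^{L_*}(\beta_l\wedge 1)$ on the $i$-th layer's error, with accumulated prefactor $C_{L_*}\prod_{l=0}^{L_*-1}(2C_l)^{\beta_{l+1}}$. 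The delicate points are verifying that these exponents and constants combine exactly as stated, and checking that the hypothesis $N\ge\max_i(\beta_i+1)^{\tilde r_i}\vee(\tilde C_i+1)e^{\tilde r_i}$ is precisely what keeps each per-layer error below one, so that raising to the powers $\beta_l\wedge 1$ only shrinks it; summing the propagated per-layer errors then gives the claimed bound.
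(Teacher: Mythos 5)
Your proposal follows essentially the same route as the paper's proof: rescale each layer's coordinate functions to $(\beta_i,\tilde C_i)$-H\"older maps on the unit cube, invoke the Schmidt-Hieber single-function approximation theorem (whose internal Taylor-plus-multiplication-gadget construction you describe rather than cite), clip outputs back into $[0,1]$ with two extra ReLU layers, parallelize and compose, and propagate errors through the composition with the exponents $\prod_{l>i}(\beta_l\wedge 1)$ via the composition lemma. The only cosmetic difference is that the paper attributes the clipping cost to the $+4$ parameters and $+2$ layers per block rather than absorbing it into $\tilde C_i$, but the argument is the same.
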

\begin{proof}
By definition, we write $f(\bm{z})$ as
\begin{equation*}
     f(\bm{z})=\bm{g}_{L_*}\circ\ldots\circ \bm{g}_1 \circ \bm{g}_0(\bm{z}),\quad\quad \textrm{ for } \bm{z} \in [a_0, b_0]^{r_0} ,
\end{equation*}
where $\bm{g}_i=(g_{i,1},\ldots, g_{i,r_{i+1}})^\top: [a_i, b_i]^{r_i}\to [a_{i+1}, b_{i+1}]^{r_{i+1}}$ for some $|a_i|, |b_i|\leq C_i$ and the functions $g_{i,j}: [a_i, b_i]^{\tilde{r}_i} \to [a_{i+1}, b_{i+1}]$ are $(\beta_i, C_i)$-H\" older smooth and $r_{L_*+1} = 1$. For $i=0,\ldots, L_*-1$, the domain and range of $\bm{g}_i$ are $[a_i, b_i]^{r_i}$ and $[a_{i+1}, b_{i+1}]^{r_{i+1}}$, respectively.  First of all, we will rewrite $f$ as the composition of the functions $\bm{h}_i :=(h_{i,1},\ldots, h_{i,r_{i+1}})^\top$ whose domain and range are $[0, 1]^{r_i}$ and $[0, 1]^{r_{i+1}}$ which are constructed via linear transformation. That is, we define
\begin{align*}
\bm{h}_i(\bm{z}) &:= \frac{\bm{g}_i((b_i - a_i)\bm{z} - a_{i+1})}{b_{i+1} - a_{i+1}},\quad\quad \textrm{ for } \bm{z} \in [0, 1]^{r_i}, ~ i = 0,\ldots, L_*-1, \\
\bm{h}_{L_*}(\bm{z}) &:= \bm{g}_{L_*}((b_{L_*} - a_{L_*})\bm{z}+a_{L_*}),\quad\quad \textrm{ for } \bm{z} \in [0, 1]^{r_{L_*}}.
\end{align*}
Therefore the following equality holds
\begin{equation*}
     f(\bm{z})=\bm{g}_{L_*}\circ\ldots\circ \bm{g}_1 \circ \bm{g}_0(\bm{z})=\bm{h}_{L_*}\circ\ldots\circ \bm{h}_1 \circ \bm{h}_0(\frac{\bm{z}-a_0}{b_0 - a_0}),\quad\quad \textrm{ for } \bm{z} \in [a_0, b_0]^{r_0} .
\end{equation*}
Since $g_{i,j}: [a_i, b_i]^{\tilde{r}_i} \to [a_{i+1}, b_{i+1}]$ are all  $(\beta_i, C_i)$-H\" older smooth, it follows that $h_{i,j}: [0, 1]^{\tilde{r}_i} \to [0, 1]$ are all  $(\beta_i, \tilde{C}_i)$-H\" older smooth as well, where $\tilde{C}_i$ is a constant only depending on $\bm{a}, \bm{b}$, and $\bm{C}$, i.e., $\tilde{C}_i = \sum_{k=0}^{i}C_k\frac{b_k-a_k}{b_{k+1}-a_{k+1}}$ for $i = 0,\ldots,L_*-1$, and $\tilde{C}_{L_*} = \sum_{k=0}^{L_*}C_k\frac{b_k-a_k}{b_{k+1}-a_{k+1}}+b_{L_*} - a_{L_*}$.

By Theorem 5 of \citet{schmidt-hieber}, for any integer $m\geq1$ and $N \geq \max_{i=0,\ldots,L_*}(\beta_i+1)^{\tilde{r}_i} \vee (\tilde{C}_i+1)e^{\tilde{r}_i}$, there exists a network
\[
\tilde{h}_{i,j} \in \mathcal{F}(L_i, (\tilde{r}_i, 6(\tilde{r}_i + \ceil{\beta_i})N,\ldots,6(\tilde{r}_i + \ceil{\beta_i})N,1), \tau_i, \infty),
\]
with $L_i = 8 + (m+5)(1+\ceil{\log_2(\tilde{r}_i\vee\beta_i)})$ and $\tau_i\leq141(\tilde{r}_i + \beta_i + 1)^{3+\tilde{r}_i}N(m+6)$, such that
\[
\|\tilde{h}_{i,j} - h_{i,j}\|_\infty \leq (2\tilde{C}_i+1)(1+\tilde{r}^2_i + \beta_i^2)6^{\tilde{r}_i}N2^{-m} + \tilde{C}_i3^{\beta_i}N^{-\frac{\beta_i}{\tilde{r}_i}}.
\]
Note that the value of $\tilde{h}_{i,j}$ is $(-\infty, \infty)$. So we define $h^*_{i,j} := \sigma(-\sigma(-\tilde{h}_{i,j}+1)+1)$ by adding two more layers $\sigma(1-x)$ to restrict $h^*_{i,j}$ into the interval $[0, 1]$, where $\sigma(x) = \max(0, x)$. This introduces two more layers and four more parameters. By the fact that $h_{i,j} \in [0, 1]$, we have $h^*_{i,j}\in \mathcal{F}(L_i+2, (\tilde{r}_i, 6(\tilde{r}_i + \ceil{\beta_i})N,\ldots,6(\tilde{r}_i + \ceil{\beta_i})N,1), \tau_i + 4, \infty)$ and
\[
\|h^*_{i,j} - h_{i,j}\|_\infty \leq \|\tilde{h}_{i,j} - h_{i,j}\|_\infty \leq (2\tilde{C}_i+1)(1+\tilde{r}^2_i + \beta_i^2)6^{\tilde{r}_i}N2^{-m} + \tilde{C}_i3^{\beta_i}N^{-\frac{\beta_i}{\tilde{r}_i}}.
\]
We further parallelize all $(h^*_{i,j})_{j=1,\ldots,r_{i+1}}$ together, obtaining $\bm{h}^*_{i} := (h^*_{i, 1}, \ldots, h^*_{i, r_{i+1}})^\top \in \mathcal{F}(L_i+2, (r_i, 6r_{i+1}(\tilde{r}_i + \ceil{\beta_i})N,\ldots,6r_{i+1}(\tilde{r}_i + \ceil{\beta_i})N,r_{i+1}), r_{i+1}(\tau_i + 4), \infty)$.
Moreover, we construct the composite network $f_* := \bm{h}^*_{L_*}\circ\ldots\circ\bm{h}^*_1\circ\bm{h}^*_0 \in \mathcal{F}(3L_* + \sum_{i=0}^{L_*}L_i, (r_0, 6\eta N,\ldots,6\eta N, 1), \sum_{i=0}^{L_*}r_{i+1}(\tau_i + 4), \infty)$, where $\eta = \max_{i=0,\ldots, L_*}(r_{i+1}(\tilde{r}_i + \ceil{\beta_i}))$.

By Lemma 3 in \citet{schmidt-hieber},
\begin{align*}
\|f - f_* \|_\infty =& \|\bm{h}_{L_*}\circ\ldots\circ \bm{h}_1 \circ \bm{h}_0 -  \bm{h}^*_{L_*}\circ\ldots\circ\bm{h}^*_1\circ\bm{h}^*_0\|_\infty\\
\leq& C_{L_*}\prod_{l=0}^{L_*-1}(2C_l)^{\beta_{l+1}}\sum_{i=0}^{L_*}\||\bm{h}_i - \bm{h}^*_i|_\infty\|_\infty^{\prod_{l=i+1}^{L_*}\beta_l\wedge1}\\
\leq& C_{L_*}\prod_{l=0}^{L_*-1}(2C_l)^{\beta_{l+1}}\sum_{i=0}^{L_*}\left((2\tilde{C}_i+1)(1+\tilde{r}^2_i + \beta_i^2)6^{\tilde{r}}N2^{-m} + \tilde{C}_i3^{\beta_i}N^{-\frac{\beta_i}{\tilde{r}_i}}\right)^{\prod_{l=i+1}^{L_*}\beta_l\wedge1}\\
\leq& C_{L_*}\prod_{l=0}^{L_*-1}(2C_l)^{\beta_{l+1}}\sum_{i=0}^{L_*}((2\tilde{C}_i+1)(1+\tilde{r}^2_i + \beta_i^2)6^{\tilde{r}}N2^{-m})^{\prod_{l=i+1}^{L_*}\beta_l\wedge1} \\
&+C_{L_*}\prod_{l=0}^{L_*-1}(2C_l)^{\beta_{l+1}}\sum_{i=0}^{L_*}(\tilde{C}_i3^{\beta_i}N^{-\frac{\beta_i}{\tilde{r}_i}})^{\prod_{l=i+1}^{L_*}\beta_l\wedge1}.
\end{align*}
\end{proof}

\noindent\textbf{\emph{Proof of Theorem \ref{thm.main}}}: By Theorem \ref{thm: oracle} with $\delta = n^{-2}$ and $\varepsilon=1$, it follows that
\[
R_n(\widehat{f}_{\textrm{local}}, f_0) \lesssim \inf_{\tilde{f} \in \mathcal{F}(L, \bm{p}, \tau, F)} \|\tilde f-f_0\|^2_\infty  + \frac{(\tr(\Gamma_n^2)+n)\tau(\log(Ln^2)+L\log{\tau})}{n^2} + \Delta_n(\widehat{f}_{\textrm{local}}).
\]
Next, we need to analyze the first term. Since $f_{0} \in \mathcal{CS}(L_*, \bm{r}, \bm{\tilde{r}}, \bm{\beta},\bm{a}, \bm{b}, \bm{C})$, by Lemma \ref{lemma: approx}, for any $m>0$, there exists a neural network
\[
f_{*} \in \mathcal{F}(L, (d, N,\ldots, N, 1), \tau, \infty),
\]
with $L \asymp m, N \geq 6\eta\max_{i=0,\ldots,L_*}(\beta_i+1)^{\tilde{r}_i} \vee (\tilde{C}_i+1)e^{\tilde{r}_i}, \eta = \max_{i=0,\ldots, L_*}(r_{i+1}(\tilde{r}_i + \ceil{\beta_i})), \tau\lesssim mN,$ such that
\begin{align} \label{eq:proof_thm2}
\| f_{*} - f_{0}\|_{\infty} &\lesssim \sum_{i=0}^{L_*}(N2^{-m})^{\prod_{l=i+1}^{L_*}\beta_l\wedge1} + (N^{-\frac{\beta_i}{\tilde{r}_i}})^{\prod_{l=i+1}^{L_*}\beta_l\wedge1}\nonumber\\
&\lesssim \sum_{i=0}^{L_*}(N2^{-m})^{\prod_{l=i+1}^{L_*}\beta_l\wedge1} + N^{-\frac{\beta_i^*}{\tilde{r}_i}}\nonumber\\
&\lesssim (N2^{-m})^{\prod_{l=1}^{L_*}\beta_l\wedge1} + N^{-\frac{\beta^*}{r^*}},
\end{align}
where recall that $\beta^*=\beta_{i^*}^*$ and $r^*=\tilde{r}_{i^*}$. For simplicity, we let $\bm{p} = (d, N,\ldots, N, 1)$. This means there exists a sequence of networks $(f_n)_n$ such that for all sufficiently large $n$, $\| f_n - f_{0}\|_{\infty} \lesssim (N2^{-m})^{\prod_{l=1}^{L_*}\beta_l\wedge1} + N^{-\frac{\beta^*}{r^*}}$ and $f_n \in \mathcal{F}(L, \bm{p}, \tau, \infty).$ Next define $\grave{f}:=f_n(\|f_{0}\|_\infty/\|f_{n}\|_\infty\wedge1) \in \mathcal{F}(L, \bm{p}, \tau, F), F \geq \max_{i=0,\ldots,L^*}(C_i, 1)$, and it is obvious that $\|\grave{f} - f_{0}\|_{\infty}\lesssim (N2^{-m})^{\prod_{l=1}^{L_*}\beta_l\wedge1} + N^{-\frac{\beta^*}{r^*}}$. Then it follows that
\begin{equation}
\inf_{\tilde{f} \in \mathcal{F}(L, \bm{p}, \tau, F)}\|\tilde{f} - f_{0}\|_\infty \lesssim\|\grave{f} - f_{0}\|_{\infty}\lesssim (N2^{-m})^{\prod_{l=1}^{L_*}\beta_l\wedge1} + N^{-\frac{\beta^*}{r^*}}.
\end{equation}

By combining (\ref{eq:proof_thm2}) and the fact that $\tau\lesssim LN$, the proof is completed.
$\square$
\end{document}